\documentclass[letterpaper, 10 pt, conference]{ieeeconf}  
\usepackage{setspace}
\usepackage{multirow}
\usepackage{booktabs}
\usepackage{verbatim} 
\usepackage{empheq}
\usepackage[normalem]{ulem}
\usepackage{soul}
\usepackage{xcolor}
\usepackage{algorithm,algorithmic}
\usepackage{bbm}
\usepackage{amsmath,amssymb}
\usepackage{balance}
\allowdisplaybreaks[2]
\usepackage{amsfonts}
\usepackage{graphicx}
\graphicspath{{./figures/}}
\usepackage{subcaption}
\usepackage{pict2e,color}
\usepackage{cite}

\newtheorem{theorem}{Theorem}
\newtheorem{lemma}{Lemma}
\newtheorem{proposition}{Proposition}
\newtheorem{corollary}{Corollary}

\newtheorem{assumption}{Assumption}

\IEEEoverridecommandlockouts                              

\usepackage{tikz}
\usepackage{textcomp}
\usepackage{hyperref}
\usepackage{lipsum}

\usepackage{graphics} 
\usepackage{epsfig} 
\usepackage{times} 
\usepackage{amsmath} 
\usepackage{amssymb}  

\title{\LARGE \bf
Residuals-based Offline Reinforcement Learning
}

\author{Qing Zhu$^{1}$ and Xian Yu$^{2}$
\thanks{This work was supported in part by the National Science Foundation under Grant \#2331782.}
\thanks{$^{1}$Department of Integrated Systems Engineering, The Ohio State University, Columbus, OH, USA,
        {\tt\small zhu.2166@osu.edu}}%
\thanks{$^{2}$Corresponding author; Department of Integrated Systems Engineering, The Ohio State University, Columbus, OH, USA,
        {\tt\small yu.3610@osu.edu}}%
}

\begin{document}

\maketitle
\thispagestyle{empty}
\pagestyle{empty}

\begin{abstract}
Offline reinforcement learning (RL) has received increasing attention for learning policies from previously collected data without interaction with the real environment, which is particularly important in high-stakes applications. 
While a growing body of work has developed offline RL algorithms, these methods often rely on restrictive assumptions about data coverage and suffer from distribution shift.
In this paper, we propose a residuals-based offline RL framework for general state and action spaces. Specifically, we define a residuals-based Bellman optimality operator that explicitly incorporates estimation error in learning transition dynamics into policy optimization by leveraging empirical residuals. We show that this Bellman operator is a contraction mapping and identify conditions under which its fixed point is asymptotically optimal and possesses finite-sample guarantees. We further develop a residuals-based offline deep Q-learning (DQN) algorithm. Using a stochastic CartPole environment, we demonstrate the effectiveness of our residuals-based offline DQN algorithm.

\end{abstract}

\section{INTRODUCTION}
Sequential decision-making under uncertainty is fundamental to many real-world applications. In these problems, agents need to make decisions through interacting with the environment to minimize some cumulative costs (or maximize some cumulative rewards). Markov decision processes (MDPs) provide a natural framework for modeling such problems \cite{puterman2014markov}, and reinforcement learning (RL) has become a popular tool for solving them \cite{sutton2018reinforcement}. However, online RL is rarely deployed in high-stakes applications, such as transportation, energy, and healthcare systems, due to its reliance on interaction with the real environment. Specifically, online RL relies on trial-and-error learning, during which the decisions may be highly suboptimal, potentially resulting in catastrophic outcomes. In this paper, we focus on the \textit{offline} RL setting instead, where learning is based on a static logged dataset collected under a behavior policy. We refer interested readers to an extensive review on offline RL \cite{levine2020offline}.

The essential limitations in offline RL are threefold. First, since the learning algorithm entirely relies on the static offline dataset, this dataset must have sufficient coverage over the state-action space. A common assumption requires that the state-action distribution induced by the target policy is absolutely continuous with respect to the behavior distribution, with a uniformly bounded density ratio (often referred to as a \textit{concentrability coefficient}) \cite{munos2005error,munos2008finite,chen2019information,jiang2025offline}. This assumption is often prohibitive in practice, especially for problems with continuous state and/or action spaces, as it requires the data distribution to uniformly cover all reachable states.
Second, offline RL is often concerned about counterfactual queries, i.e., asking what might have happened if the agent had implemented a different action than the one in the dataset. This is impossible in the high-stakes settings, as it is impractical to experiment with different decisions solely to observe the resulting state. Third, a fundamental challenge in offline RL is the \textit{distribution shift} between the behavior policy governing the offline dataset and the policy being trained, leading to unreliable policy evaluation. 
Distribution shift can be addressed by constraining the distance between the learned policy and the behavior policy \cite{kakade2002approximately,schulman2015trust,fujimoto2019off,kumar2019stabilizing,wu2019behavior}. However, since the past decisions in the offline dataset may be suboptimal, enforcing such constraints may prevent the learned policy from achieving optimal performance.

In this paper, we propose a residuals-based offline RL framework for general state and action spaces. In contextual stochastic optimization, \cite{kannan2025data} proposed an empirical residuals-based sample average approximation (ER-SAA) for single-stage and two-stage stochastic programs that directly incorporates the estimation error in predicting the uncertainty into the downstream optimization problem.
Leveraging this idea, in this work, we first construct an estimated transition model from the static offline data using supervised learning. Based on the observed transitions, we compute the empirical residuals, which capture the discrepancy between the learned model and the true system dynamics. We generate trajectories by sampling residuals and adding them to the point prediction of the next state, and then train our policies using these generated trajectories. This framework addresses the current limitations in offline RL in the following ways. First, different than the vast literature, we do not require the dataset to cover all the state-action pairs, as the sampling procedure will automatically generate states that are unseen in the dataset due to the empirical residuals. Second, this residuals-based framework emulates real-world dynamics and provides approximate answers to counterfactual queries. Third, this framework addresses distribution shift by collecting data following the learned policy and using it to evaluate the value function, enabling \textit{on-policy} training.

The rest of this paper is organized as follows. We first introduce our infinite-horizon discounted MDP setting and review ER-SAA in Section~\ref{sec:pre}. In Section~\ref{sec:offlineRL}, we develop the residuals-based offline RL framework and establish its consistency and finite sample guarantees. For practical use, we also design a residuals-based offline deep Q-learning (DQN) algorithm. Finally, we present our numerical results on a stochastic CartPole environment in Section~\ref{sec:results}.

\section{PRELIMINARIES}\label{sec:pre}
\subsection{Infinite-horizon Discounted MDPs}

We consider an infinite-horizon $\gamma$-discounted MDP represented by the tuple $(\mathcal S,\mathcal A, P, c, \gamma)$, where $\mathcal S$ is the convex state space, $\mathcal A$ is the compact (possibly continuous) action space, $P(\cdot|s,a)$ is the conditional density of the transition distribution, $c(s,a)$ is the deterministic immediate cost function\footnote{Our framework can be extended to incorporate stochastic immediate cost with relative ease.}, and $\gamma\in[0,1)$ is the discount factor. Note that we require a convex state space to ensure the Lipschitz continuity of the orthogonal projection $\Pi$, i.e., $\|\Pi_{\mathcal S} a - \Pi_{\mathcal S} b\| \le \|a-b\|,\ \forall a,b$.

A stationary deterministic Markov policy is a mapping $\pi:\mathcal S\to\mathcal A$, which specifies the action $a=\pi(s)$ in state $s$. Let $M$ denote the set of all stationary deterministic Markov policies. For any policy $\pi \in M$, the corresponding state value function and state-action value function are defined as
$V^\pi(s):=
\mathbb E^\pi\left[
\sum_{t=0}^{\infty}\gamma^t c(s_t,a_t)|s_0=s\right],\ a_t=\pi(s_t),\ s_{t+1}\sim P(\cdot|s_t,a_t)$
and $Q^\pi(s,a) :=
\mathbb E^\pi\!\left[
\sum_{t=0}^{\infty}\gamma^t c(s_t,a_t)
\,\middle|\, s_0=s,\ a_0=a
\right],\ a_t=\pi(s_t),\ s_{t+1}\sim P(\cdot|s_t,a_t)$, respectively.
It is well known that there exists an optimal stationary deterministic Markov policy in risk-neutral RL \cite{puterman2014markov}. Without loss of optimality, our goal is to find a policy $\pi \in M$ that minimizes the expected total discounted cost: $\min_{\pi\in M} V^{\pi}(s)$.

Denote the optimal policy as $\pi^\star$ and the optimal value function and state-action value function by $V^\star(s)$ and $Q^\star(s,a)$, where $V^\star(s)=\min_{a\in \mathcal A}Q^\star(s,a)=Q^\star(s,\pi^\star(s))$. According to \cite{puterman2014markov}, they satisfy the following Bellman equations 
\begin{align*}
&V^\star(s)
=
\min_{a\in\mathcal A}
\left\{
c(s,a)
+
\gamma\,
\mathbb E\!\left[
V^\star(s')
\,\middle|\, s'\sim P(\cdot\mid s,a)
\right]
\right\},\\
&Q^\star(s,a)=
c(s,a)+\gamma\,
\mathbb E\!\left[
\min_{a'\in\mathcal A} Q^\star(s',a')
\,\middle|\, s'\sim P(\cdot\mid s,a)
\right].
\end{align*}

Let $f^\star$ denote the ground-truth (unknown) transition function such that $s' = f^\star(s,a) + \epsilon$, where $\epsilon\sim P_{\epsilon}$ is the zero-mean random noise. Then, the conditional density $P(\cdot|s,a)$ can be equivalently converted to the density $P_{\epsilon}$. Specifically, let $ T^\star$ denote the Bellman optimality operator and $\mathcal B(\mathcal S,\mathcal A)$ denote the set of all bounded measurable real-valued functions on $\mathcal S \times \mathcal A$ with supremum norm $\|Q\|_\infty := \sup_{(s,a) \in \mathcal S \times \mathcal A} |Q(s,a)|$. Then, for any function $Q \in \mathcal B(\mathcal S,\mathcal A)$, we have
\begin{align}\label{eq: true bellman operator}
&(T^\star Q)(s,a)
:= c(s,a)
+ \gamma \int_{\mathcal{S}}
\min_{a'\in\mathcal{A}} Q(s',a')\,
P(\mathrm{d}s'\mid s,a)\nonumber\\
&=
c(s,a)
+\gamma \int \min_{a' \in \mathcal A} Q\bigl(f^\star(s,a)+\epsilon,\; a'\bigr)\,P_{\epsilon}(d\epsilon).
\end{align}
From \cite{puterman2014markov}, the Bellman optimality operator defined in\ \eqref{eq: true bellman operator} is a contraction mapping, where $Q^\star$ is its unique fixed point. 

\noindent\textbf{Offline RL.}
In this paper, we study an offline RL setting in which learning is based on a fixed batch dataset $D_N=\{(s^{(i)},a^{(i)},c^{(i)},s'^{(i)})\}_{i=1}^N$ instead of on direct interaction with the environment. The dataset is generated according to a behavior distribution $d^b$ and the transition kernel $P$, i.e., $(s^{(i)},a^{(i)})\sim d^b,\ c^{(i)}=c(s^{(i)},a^{(i)}),\ s'^{(i)}\sim P(\cdot\mid s^{(i)},a^{(i)}),\ i=1,\dots,N.$ Given a test state distribution $\rho\in\Delta(\mathcal S)$, the aim of offline RL is to learn a policy $\pi$ such that for a prescribed accuracy level $\epsilon>0$, we have $V^\star(\rho)-V^\pi(\rho)\le \epsilon$ with high probability. 

\subsection{ER-SAA}
Next, we review the ER-SAA proposed in \cite{kannan2025data}. Contextual stochastic optimization studies decision-making under uncertainty by leveraging contextual information that affects the distribution of random outcomes. Specifically, given a new observation of the random contextual information $X = x$, we aim to solve the following stochastic optimization problem $\min_{z \in \mathcal Z} \mathbb E[c(z,Y)|X=x]$, where $z \in \mathcal Z$ denotes the decision variables, $Y \in \mathcal Y$ denotes the uncertainty, $c$ is a cost function, and the expectation is computed with respect to the distribution of $Y$ conditioned on $X=x$.

Assume that the true relationship between the random vector $Y$ and the random covariates $X$ is given by $Y = f^\star(X) + \varepsilon$, where $f^\star(x):=\mathbb E[Y|X=x]$ is the true regression function, and $\varepsilon$ is a zero-mean random error independent of $X$. 
In practice, $f^\star$ is usually unknown and can be estimated from some dataset $D_N=\{(x^{i},y^{i})\}_{i=1}^N$. In \cite{kannan2025data}, the authors first estimate $f^\star$ with $\hat f_N$ based on dataset $D_N$ and define the empirical residuals as $\hat{\epsilon}_N^k := y^k - \hat f_N(x^k), \ k=1,\dots,N.$ Using $\hat f_N$ and the empirical residuals, one can construct the ER-SAA as follows:
\begin{align}\label{eq:ersaa}
\min_{z\in\mathcal Z}\frac{1}{N}\sum_{k=1}^N c\!\left(z,\Pi_{\mathcal Y}\big(\hat f_N(x)+\hat\epsilon_N^k\big)\right),
\end{align}
where $\Pi_{\mathcal Y}$ ensures that the predicted scenarios lie within $\mathcal Y$.

In \cite{kannan2025data}, the authors derived conditions under which the solutions of the ER-SAA \eqref{eq:ersaa} are asymptotically optimal and possess finite sample guarantees. However, this framework has only been discussed in the single-stage or two-stage setting when the uncertainty is solely affected by the contextual information. Next, we apply the empirical residuals-based framework to offline RL, where the state can be affected by both the previous state and action, and we learn such transition dynamics using an offline dataset.

\section{RESIDUALS-BASED OFFLINE RL}\label{sec:offlineRL}
\allowdisplaybreaks
In this section, we introduce the residuals-based offline RL framework (see Fig.\ \ref{fig:flowchart}), which consists of two main steps: (i) \textbf{Regression}, where we estimate the transition function based on the offline dataset using supervised learning and construct the empirical residuals and (ii) \textbf{Residuals-based offline RL}, where we generate next states by sampling empirical residuals and adding them to the point prediction and train policies based on the generated trajectories. 

Next, we first define the residuals-based and full-information Bellman optimality operators and study the property of their fixed points in Section~\ref{sec:operator}. Then we prove the consistency and finite sample guarantees of the residuals-based offline RL framework in Section~\ref{sec:consistency}.
\begin{figure}[ht!]
    \centering
\includegraphics[width=\linewidth]{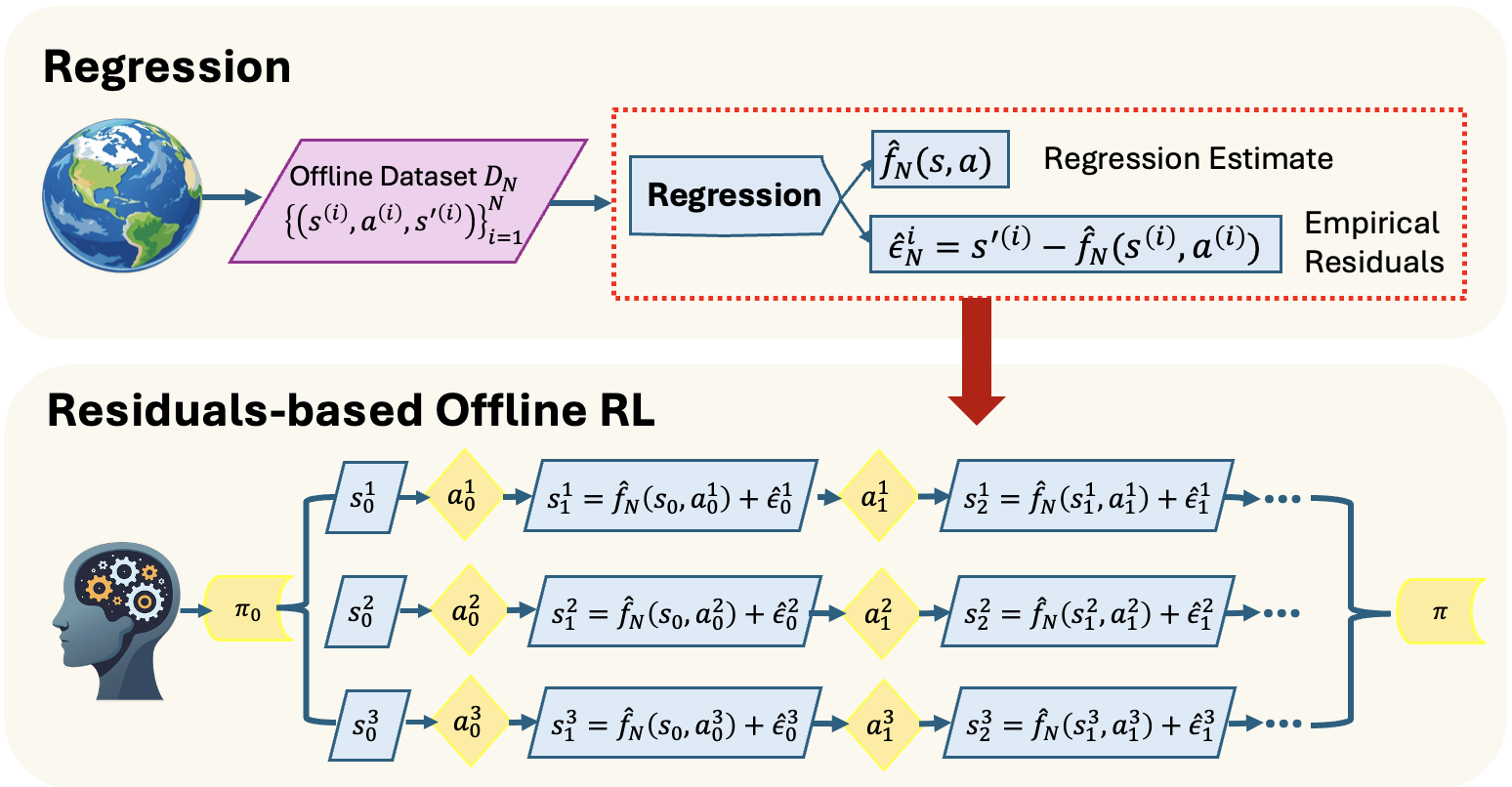}
    \caption{Flowchart of the residuals-based offline RL.}
    \label{fig:flowchart}
\end{figure}

\subsection{Residuals-based Bellman Optimality Operator}\label{sec:operator}

Because the true transition function $f^\star$ is unknown, we estimate it from the offline dataset $D_N =\{(s^{(i)},a^{(i)},c^{(i)},s'^{(i)})\}_{i=1}^N$ using supervised learning. Let $\mathcal{F}$ denote a function class that contains $f^*$\footnote{See Remark 1 in \cite{kannan2025data} when $f^*\not\in\mathcal F$. In this misspecified setting, we can replace $f^*$ with the best approximator in the class in our analysis.}. Given dataset $D_N$ and some loss function $\ell: \mathcal{S} \times \mathcal{S} \to \mathbb{R}_+$, we obtain an estimator $\hat f_N$ by solving an empirical risk minimization problem $\hat f_N \in \arg\min_{f \in \mathcal{F}} \frac{1}{N} \sum_{i=1}^N \ell\bigl(f(s^{(i)}, a^{(i)}),\, s'^{(i)}\bigr)$. We then construct the empirical residuals $\hat\epsilon_N^{(i)}:=s'^{(i)}-\hat f_N(s^{(i)},a^{(i)}), \ i=1,\ldots,N$, which measure the discrepancy between the observation and the predicted next state. Finally, given any state-action pair $(s,a)$, we add these $N$ empirical residuals to the point prediction of the next state and project them onto the state space $\mathcal S$, i.e., $\Pi_{\mathcal S}(\hat f_N(s,a)+\hat\epsilon_N^{(i)}),\ \forall i=1,\ldots,N$. We then approximate the true transition kernel with the empirical transition kernel $\hat P_N(\cdot|s,a)
:=\frac{1}{N}\sum_{i=1}^N \delta_{\Pi_{\mathcal S}(\hat f_N(s,a)+\hat\epsilon_N^{(i)})}(\cdot)$ supported on the $N$ scenarios with equal probability $1/N$, where $\delta(\cdot)$ denotes the Dirac measure. Based on this empirical transition kernel, we define the corresponding \textbf{residuals-based Bellman optimality operator} as follows
\begin{align}\label{eq:empirical bellman operator}
&(\hat{T}_N Q)(s,a)
:= c(s,a)+\gamma \int_{\mathcal S}\min_{a'\in\mathcal A}Q(s',a')\,\hat P_N(\mathrm ds'\mid s,a) \nonumber\\
&= c(s,a)+ \frac{\gamma}{N}\sum_{i=1}^N
\min_{a'\in\mathcal A}Q\!\Bigl(\Pi_{\mathcal S}(\hat f_N(s,a)+\hat\epsilon_N^{(i)}),a'\Bigr).
\end{align}

To facilitate analysis, we also introduce the following full-information Bellman optimality operator. Under the true regression $f^\star$, we use the residuals $\epsilon^{(i)} = s'^{(i)} - f^\star(s^{(i)},a^{(i)}),\ i=1,\ldots, N$ to approximate the true error $\epsilon$. Correspondingly, we construct the full-information transition kernel as $P_N^\star(\cdot|s,a)
:=\frac{1}{N}\sum_{i=1}^{N}\delta_{f^\star(s,a)+\epsilon^{(i)}}(\cdot)$. Note that because $f^\star(s,a)+\epsilon^{(i)}$ falls within the state space $\mathcal S$, we do not need a projection here. As a result,\ \eqref{eq: true bellman operator} can be approximated with the \textbf{full-information Bellman optimality operator} as
\allowdisplaybreaks
\begin{align}\label{eq:full info bellman operator}
&(T_N^\star Q)(s,a)
:= c(s,a) + \gamma \int_{\mathcal S} \min_{a'\in\mathcal A} Q(s',a')\,
P_N^\star(\mathrm ds'\mid s,a) \nonumber \\
&=  c(s,a) + \frac{\gamma}{N} \sum_{i=1}^N \min_{a' \in \mathcal A} Q(f^\star(s,a)+\epsilon^{(i)}
,a').
\end{align}

In practice, this full-information Bellman optimality operator cannot be implemented because both the true transition model $f^\star$ and residuals $\epsilon^{(i)}$ remain unknown. Instead, we will implement the residuals-based Bellman optimality operator based on the estimated transition model $\hat{f}_N$ and empirical residuals $\hat{\epsilon}_N$. However, the full-information Bellman optimality operator will be useful in the analysis to bound the gap between $T^\star$ and $\hat{T}_N$ as an intermediate step. In the remainder of this section, we study the Lipschitz continuity of the fixed points induced by the full-information Bellman optimality operator $T_N^\star$ and the true Bellman optimality operator $T^\star$. We begin by showing that both $\hat{T}_N$ and $T_N^\star$ are contraction mappings in the next theorem, which guarantees the existence and uniqueness of their fixed points. 

\begin{theorem}\label{thm:contraction}
For $0\le \gamma<1$, the residuals-based Bellman optimality operator $\hat{T}_N$ defined in\ \eqref{eq:empirical bellman operator} and the full-information Bellman operator $T^\star_N$ defined in\ \eqref{eq:full info bellman operator} are $\gamma$-contraction mappings, i.e., $\forall\, Q^{1},\, Q^{2} \in \mathcal{B}(\mathcal{S}\times\mathcal{A})$, we have
\begin{align*}
&\|\hat{T}_N Q^{1}-\hat{T}_N Q^{2}\|_\infty
\le \gamma \|Q^{1}-Q^{2}\|_\infty,\\
&\|T_N^\star Q^{1}-T_N^\star Q^{2}\|_\infty
\le \gamma \|Q^{1}-Q^{2}\|_\infty 
\end{align*}
\end{theorem}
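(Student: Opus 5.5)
The plan is the standard sup-norm argument for Bellman operators, applied pointwise to the finite-sum representations \eqref{eq:empirical bellman operator} and \eqref{eq:full info bellman operator}. Both operators have the form $(TQ)(s,a)=c(s,a)+\frac{\gamma}{N}\sum_{i=1}^N \min_{a'\in\mathcal A}Q(y_i(s,a),a')$. Here the scenario points are $y_i(s,a)=\Pi_{\mathcal S}(\hat f_N(s,a)+\hat\epsilon_N^{(i)})$ for $\hat T_N$ and $y_i(s,a)=f^\star(s,a)+\epsilon^{(i)}$ for $T_N^\star$. In both cases $y_i(s,a)\in\mathcal S$: this is by the projection in the first case, and by the remark following \eqref{eq:full info bellman operator} in the second. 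The proof therefore treats both at once, and only the location of the scenario points matters.

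First, I would fix $(s,a)$ and subtract. The cost terms $c(s,a)$ cancel, leaving
\[
\bigl|(TQ^1)(s,a)-(TQ^2)(s,a)\bigr|\le \frac{\gamma}{N}\sum_{i=1}^N\Bigl|\min_{a'\in\mathcal A}Q^1(y_i,a')-\min_{a'\in\mathcal A}Q^2(y_i,a')\Bigr|.
\]

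Second, I would use the elementary inequality $|\inf_{a'} g_1(a')-\inf_{a'} g_2(a')|\le \sup_{a'}|g_1(a')-g_2(a')|$. To prove it, note that for every $a'$ we have $\inf g_1\le g_1(a')\le g_2(a')+\sup|g_1-g_2|$; taking the infimum over $a'$ and then swapping the roles of $g_1,g_2$ gives the claim. Working with infima avoids any question of whether the minimum over the compact set $\mathcal A$ is attained for a merely bounded measurable $Q$; boundedness of $Q$ makes these infima finite. Each summand is then at most $\sup_{a'}|Q^1(y_i,a')-Q^2(y_i,a')|\le\|Q^1-Q^2\|_\infty$, because $y_i\in\mathcal S$.

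Third, averaging the $N$ identical bounds gives $\gamma\|Q^1-Q^2\|_\infty$ pointwise, and taking the supremum over $(s,a)$ yields the stated contraction.

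One small point, which I expect to be the only real obstacle, is that $\hat T_N$ and $T_N^\star$ should map $\mathcal B(\mathcal S,\mathcal A)$ into itself. Boundedness follows from $\|TQ\|_\infty\le\|c\|_\infty+\gamma\|Q\|_\infty$, assuming $c$ is bounded. Measurability of $s\mapsto\min_{a'}Q(s,a')$ needs a standard measurable-selection or continuity assumption, which I would simply take for granted as the paper does implicitly. The contraction inequality itself does not depend on this. With self-mapping in hand, Banach's fixed point theorem gives existence and uniqueness of the fixed points, which is how the theorem will be used later.
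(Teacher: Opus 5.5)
Your proposal is correct and follows essentially the same argument as the paper: fix $(s,a)$, cancel the cost terms, bound each difference of minima by $\sup_{a'}|Q^1(y_i,a')-Q^2(y_i,a')|\le\|Q^1-Q^2\|_\infty$, average the $N$ bounds, and take the supremum over $(s,a)$, with $T_N^\star$ handled by swapping in the scenario points $f^\star(s,a)+\epsilon^{(i)}$. Your extra remarks on using infima instead of minima, and on $\hat T_N$ and $T_N^\star$ mapping $\mathcal B(\mathcal S,\mathcal A)$ into itself, are refinements that the paper leaves implicit.
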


\begin{proof}
Given a state-action pair $(s,a)\in\mathcal{S}\times\mathcal{A}$, based on\ \eqref{eq:empirical bellman operator} and denoting
$s_i':=\Pi_{\mathcal S} (\hat f_{N}(s,a)+\hat\epsilon_N^{(i)})$ for $i=1,\ldots,N$, we have
\allowdisplaybreaks
\begin{align}\label{eq:contraction proof for hat T}
& \bigl|(\hat{T}_N Q^{1})(s,a)-(\hat{T}_N Q^{2})(s,a)\bigr| \nonumber\\
&= \left|\frac{\gamma}{N}\sum_{i=1}^{N}
\left[
\min_{a'\in\mathcal A} Q^{1}\!\bigl(s'_i,a'\bigr)
- \min_{a'\in\mathcal A} Q^{2}\!\bigl(s'_i,a'\bigr)
\right]\right| \nonumber \\
&\le \frac{\gamma}{N}\sum_{i=1}^{N} \sup_{a'\in\mathcal A}
\left|Q^{1}(s'_i,a')-Q^{2}(s'_i,a')\right| \nonumber \\
&\le \gamma \sup_{(s',a')\in\mathcal{S}\times\mathcal{A}}
\left|Q^{1}(s',a')-Q^{2}(s',a')\right| = \gamma \|Q^{1}-Q^{2}\|_\infty.
\end{align}
Taking the supremum over $(s,a)\in\mathcal{S}\times\mathcal{A}$ on both sides yields $\|\hat{T}_N Q^{1}-\hat{T}_N Q^{2}\|_\infty
\le \gamma \|Q^{1}-Q^{2}\|_\infty$.
The proof for the full-information Bellman optimality operator $T_N^\star$ proceeds in the same way as\ \eqref{eq:contraction proof for hat T}, with $s_i'=\Pi_{\mathcal S}(\hat f_N(s,a)+\hat\epsilon_N^{(i)})$ replaced by $s_i'=f^\star(s,a)+\epsilon^{(i)}$ based on the definition of $T_N^\star$ defined in\ \eqref{eq:full info bellman operator}. 
\end{proof}

Since both residuals-based and full-information Bellman optimality operators are contraction mappings, applying them iteratively will lead to their unique fixed points, denoted by $\hat{Q}_N$ and $Q_N^\star$, respectively. To derive the Lipschitz continuity of $Q_N^\star$, we first make the following assumptions.

\begin{assumption}\label{ass:cost}
The immediate cost $c(s,a)$ is Lipschitz continuous on $s \in \mathcal S$ uniformly over $a \in \mathcal A$, i.e., $\sup_{a\in\mathcal A}|c(s_1,a)-c(s_2,a)| \le L_c\|s_1-s_2\|,\ \forall s_1,s_2 \in \mathcal S$
with some Lipschitz constant $L_c<+\infty$.
\end{assumption}

\begin{assumption}\label{ass:f star lip}
The true transition function $f^\star$ is Lipschitz continuous on $s \in \mathcal S$ uniformly over $a \in \mathcal A$, i.e., $\sup_{a \in \mathcal A} \|f^\star(s_1,a)-f^\star(s_2,a)\| \le L_f\|s_1-s_2\|,\ \forall s_1,s_2 \in \mathcal S$ with some Lipschitz constant $L_f < +\infty$.
\end{assumption}

Assumption\ \ref{ass:f star lip} can be satisfied by many commonly used transition models under a compact action space $\mathcal A$.

Next, we show that the fixed point of the full-information Bellman optimality operator ($Q_N^\star$) is Lipschitz continuous.
\begin{proposition}[Lipschitz continuity of $Q_N^\star$]\label{prop:QNs_lip}
Suppose Assumptions\ \ref{ass:cost} and\ \ref{ass:f star lip} hold. Consider the full-information Bellman optimality operator defined in\ \eqref{eq:full info bellman operator}. If $\gamma L_f<1$, then the fixed point $Q_N^\star(\cdot,a)$ is Lipschitz continuous in $s$ uniformly over $a\in\mathcal A$, i.e.,
\[
\sup_{a\in\mathcal A}|Q_N^\star(s_1,a)-Q_N^\star(s_2,a)|
\le L_{Q^\star_N}\,\|s_1-s_2\|,\ \forall s_1,s_2\in\mathcal S,
\]
where $L_{Q^\star_N}:=\frac{L_c}{1-\gamma L_f}$. Moreover, define the value function $V_N^\star(s):=\min_{a\in\mathcal A}Q_N^\star(s,a)$. Then $V_N^\star(s)$ is also Lipschitz continuous with the same Lipschitz constant $L_{Q^\star_N}$.
\end{proposition}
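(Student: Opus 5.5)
We propose to prove the statement by showing that the operator $T_N^\star$ preserves the class of functions that are Lipschitz in $s$ with constant at most $L:=L_c/(1-\gamma L_f)$, uniformly over $a\in\mathcal A$. We then pass to the fixed point $Q_N^\star$ through the contraction established in Theorem~\ref{thm:contraction}.

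Fix $Q\in\mathcal B(\mathcal S,\mathcal A)$ with $\sup_{a}|Q(s_1,a)-Q(s_2,a)|\le L_Q\|s_1-s_2\|$. The first step is the standard min-inequality $|\min_{a'}Q(s_1,a')-\min_{a'}Q(s_2,a')|\le \sup_{a'}|Q(s_1,a')-Q(s_2,a')|$. It shows that $V_Q(s):=\min_{a'}Q(s,a')$ is $L_Q$-Lipschitz. Because the statement uses $\min$ over a compact $\mathcal A$, we will phrase this with $\inf$ if attainment is not guaranteed; the bound is unchanged.

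The second step bounds the operator increment. For any $a$ we have
\begin{align*}
|(T_N^\star Q)(s_1,a)-(T_N^\star Q)(s_2,a)|\le |c(s_1,a)-c(s_2,a)|+\frac{\gamma}{N}\sum_{i=1}^N\bigl|V_Q(f^\star(s_1,a)+\epsilon^{(i)})-V_Q(f^\star(s_2,a)+\epsilon^{(i)})\bigr|.
\end{align*}
By Assumption~\ref{ass:cost}, the first term is at most $L_c\|s_1-s_2\|$. Each summand is at most $L_Q\|f^\star(s_1,a)-f^\star(s_2,a)\|$, because the residual $\epsilon^{(i)}$ cancels, and by Assumption~\ref{ass:f star lip} this is at most $L_QL_f\|s_1-s_2\|$. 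Hence $T_N^\star Q$ is Lipschitz with constant $L_c+\gamma L_f L_Q$, uniformly in $a$. If $L_Q\le L$, this constant is at most $L_c+\gamma L_f L=L$. So the set $\mathcal L_L$ of bounded functions that are $L$-Lipschitz in $s$ uniformly over $a$ is invariant under $T_N^\star$.

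The third step, which is the only delicate point, is the limit passage. Start the value iteration from $Q^0\equiv 0$, which lies in $\mathcal L_L$. By Theorem~\ref{thm:contraction}, the iterates $Q^k=(T_N^\star)^kQ^0$ converge to $Q_N^\star$ in $\|\cdot\|_\infty$. The set $\mathcal L_L$ is closed under uniform convergence, since
\begin{align*}
|Q_N^\star(s_1,a)-Q_N^\star(s_2,a)|\le 2\|Q^k-Q_N^\star\|_\infty+L\|s_1-s_2\|,
\end{align*}
and letting $k\to\infty$ gives the claim. Alternatively, one can unroll the recursion $L_{k+1}=L_c+\gamma L_fL_k$, which converges to $L$ because $\gamma L_f<1$. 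This is exactly where the hypothesis $\gamma L_f<1$ is used.

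Finally, the Lipschitz continuity of $V_N^\star$ with the same constant follows immediately from the min-inequality in the first step applied to $Q_N^\star$.

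One technical point is that boundedness of $Q^k$, needed to stay in $\mathcal B$, requires $c$ to be bounded, which is implicit in the paper's setting. We will note this assumption rather than prove it.
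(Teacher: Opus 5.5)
Your proof is correct and follows the paper's route: value iteration under $T_N^\star$, the one-step bound giving $L_{k+1}=L_c+\gamma L_f L_k$ via Assumptions~\ref{ass:cost} and~\ref{ass:f star lip} (the residual $\epsilon^{(i)}$ cancels), and passage to $Q_N^\star$ through the contraction of Theorem~\ref{thm:contraction}. Starting from $Q^0\equiv 0$ with an invariant set, and using the explicit bound $2\|Q^k-Q_N^\star\|_\infty+L\|s_1-s_2\|$, just makes rigorous the limit step the paper states without detail; your remark that boundedness of $c$ is needed for the iterates to remain in $\mathcal B(\mathcal S,\mathcal A)$ is also apt.
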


\begin{proof}
Fix any initialization $Q_N^{(1)} \in \mathcal{B}(\mathcal{S}\times\mathcal{A})$ such that $Q_N^{(1)}(\cdot,a)$ is Lipschitz continuous in $s$ uniformly over $a \in \mathcal A$, i.e., $\sup_{a\in\mathcal A}|Q_N^{(1)}(s_1,a)-Q_N^{(1)}(s_2,a)|
\le L_1\|s_1-s_2\|,\ \forall s_1,s_2\in\mathcal S$
for some constant $L_1<+\infty$. Define the full-information value-iteration sequence by $Q_N^{(k+1)}:=T_N^\star Q_N^{(k)},\ \forall k\ge 1$.
We will prove the Lipschitz continuity of $Q_N^{(k)}$ for all $k\ge 1$ using mathematical induction. First of all, $Q_N^{(1)}$ is Lipschitz continuous with constant $L_1$ according to our initialization. Now suppose that $Q_N^{(k)}(\cdot,a)$ is Lipschitz continuous in $s$ uniformly over $a$ with a Lipschitz constant $L_k$, i.e., $\sup_{a\in\mathcal A}|Q_N^{(k)}(s_1,a)-Q_N^{(k)}(s_2,a)|
\le L_k\|s_1-s_2\|,\ \forall s_1,s_2\in\mathcal S$.

Define $V_N^{(k)}(s):=\min_{a'\in\mathcal A}Q_N^{(k)}(s,a')$. Then
$|V_N^{(k)}(s_1)-V_N^{(k)}(s_2)|
\le \sup_{a'\in\mathcal A}|Q_N^{(k)}(s_1,a')-Q_N^{(k)}(s_2,a')|
\le L_k\|s_1-s_2\|$.
As a result, $V_N^{(k)}$ is $L_k$-Lipschitz continuous. For any $a\in\mathcal A$ and $s_1,s_2\in\mathcal S$,
\allowdisplaybreaks
\small
\begin{align*}
&|Q_N^{(k+1)}(s_1,a)-Q_N^{(k+1)}(s_2,a)| \\
=&\Big|(T_N^\star Q_N^{(k)})(s_1,a)-(T_N^\star Q_N^{(k)})(s_2,a)\Big|\\
\le &|c(s_1,a)-c(s_2,a)|
+\frac{\gamma}{N}\sum_{i=1}^N
\Big|V_N^{(k)}\big(f^\star(s_1,a)+\epsilon^{(i)}\big) \\
& -V_N^{(k)}\big(f^\star(s_2,a)+\epsilon^{(i)}\big)\Big|\\
\overset{(a)}\le &L_c\|s_1-s_2\|
+\frac{\gamma}{N}\sum_{i=1}^N L_k\,\big\|f^\star(s_1,a)-f^\star(s_2,a)\big\|\\
\overset{(b)}\le &\big(L_c+\gamma L_f L_k\big)\|s_1-s_2\|,
\end{align*}
\normalsize
where $(a)$ holds due to Assumption\ \ref{ass:cost} and the Lipschitz continuity of $V_N^{(k)}$, and $(b)$ holds due to Assumption\ \ref{ass:f star lip}. Taking the supremum over $a\in\mathcal A$ shows that $Q_N^{(k+1)}$ is $(L_c+\gamma L_f L_k)$-Lipschitz continuous, which yields the recursion
$L_{k+1} = L_c+\gamma L_f L_k,\ \forall k\ge 1$,
with the initial value $L_1<+\infty$.
Denote $\alpha:=\gamma L_f\in[0,1)$. This recursion yields
$L_k 
= \alpha^{k-1}L_1 + \frac{1-\alpha^{k-1}}{1-\alpha}\,L_c,\ \forall k\ge 2$.
Therefore, $\lim_{k\to\infty}L_k=\frac{L_c}{1-\alpha}=\frac{L_c}{1-\gamma L_f}=L_{Q_N^\star}$.
Since $T_N^\star$ is a $\gamma$-contraction in the sup norm, $Q_N^{(k)}$ converges to the unique fixed point $Q_N^\star$. Therefore, the limit $Q_N^\star$
is also Lipschitz continuous with the constant $L_{Q_N^\star}$, i.e.,
$\sup_{a\in\mathcal A}|Q_N^\star(s_1,a)-Q_N^\star(s_2,a)|
\le \frac{L_c}{1-\gamma L_f}\,\|s_1-s_2\|,
\ \forall s_1,s_2\in\mathcal S$.

Additionally, $|V_N^\star(s_1)-V_N^\star(s_2)|
\le \sup_{a\in\mathcal A}|Q_N^\star(s_1,a)-Q_N^\star(s_2,a)|
\le \frac{L_c}{1-\gamma L_f}\,\|s_1-s_2\|,\ \forall s_1,s_2\in\mathcal S$, which shows that $V_N^\star$ is Lipschitz continuous with the same constant $L_{Q_N^\star}$.
\end{proof}

To show the Lipschitz continuity of the fixed point of the true Bellman optimality operator $Q^\star$, we need an additional assumption on the true transition kernel.
\begin{assumption}\label{ass:P1}
There exists $L_p < + \infty$ such that $\sup_{a\in\mathcal A} W_1 (P(\cdot|s_1,a), P(\cdot|s_2,a)) \le\ L_P\,\|s_1-s_2\|$ for all $s_1,\ s_2 \in \mathcal S$, where the 1-Wasserstein distance is defined by 
\begin{align*}
&W_1(P(\cdot|s_1,a), P(\cdot|s_2,a)):=\\
&\hspace{2cm}\inf_{\omega \in \Omega(P(\cdot|s_1,a), P(\cdot|s_2,a))}
\mathbb E_{(X,Y) \sim \omega} \bigl[\|X-Y\|\bigr].
\end{align*}
Here, $\Omega(\mu,\nu)$ denotes all couplings with marginals $\mu$ and $\nu$.
\end{assumption}

\begin{proposition}[Lipschitz continuity of $Q^\star$]\label{prop: Q lip continuous}
Suppose Assumptions\ \ref{ass:cost} and\ \ref{ass:P1} hold. Consider the true Bellman optimality operator defined in\ \eqref{eq: true bellman operator}.  If $\gamma L_p <1$, then the fixed point $Q^\star(\cdot,a)$ is Lipschitz continuous in $s$ uniformly over $a\in\mathcal A$, i.e.,
\[
\sup_{a \in \mathcal A} |Q^\star(s_1,a)-Q^\star(s_2,a)|\le L_{Q^\star}\|s_1-s_2\|,\ \forall s_1,s_2\in\mathcal S,
\]
where $L_{Q^\star}:=\frac{L_c}{1 - \gamma L_p}$. Moreover, define the value function $V^\star(s):=\min_{a\in\mathcal A}Q^\star(s,a)$. Then $V^\star(s)$ is also Lipschitz continuous with the same Lipschitz constant $L_{Q^\star}$.
\end{proposition}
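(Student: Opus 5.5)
I will mirror the value-iteration induction used for Proposition~\ref{prop:QNs_lip}, replacing the step that used Assumption~\ref{ass:f star lip} with Kantorovich--Rubinstein duality together with Assumption~\ref{ass:P1}.

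Fix an initialization $Q^{(1)}\in\mathcal B(\mathcal S,\mathcal A)$ that is $L_1$-Lipschitz in $s$ uniformly over $a$; for instance $Q^{(1)}\equiv 0$ with $L_1=0$. Define $Q^{(k+1)}:=T^\star Q^{(k)}$. Suppose $Q^{(k)}$ is $L_k$-Lipschitz uniformly in $a$. Then $V^{(k)}(s):=\min_{a'}Q^{(k)}(s,a')$ is $L_k$-Lipschitz, exactly as in the proof of Proposition~\ref{prop:QNs_lip}. For any $a$ and $s_1,s_2\in\mathcal S$ we have
\begin{align*}
|Q^{(k+1)}(s_1,a)-Q^{(k+1)}(s_2,a)| \le |c(s_1,a)-c(s_2,a)| + \gamma\Bigl|\int V^{(k)}\,dP(\cdot|s_1,a)-\int V^{(k)}\,dP(\cdot|s_2,a)\Bigr|.
\end{align*}

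The second term is the key step. Since $V^{(k)}$ is bounded and $L_k$-Lipschitz, I would bound it by $L_k\,W_1(P(\cdot|s_1,a),P(\cdot|s_2,a))$ directly from the coupling definition in Assumption~\ref{ass:P1}, without invoking full duality. For any coupling $\omega$ of the two conditionals,
\[
\Bigl|\int V^{(k)}\,dP(\cdot|s_1,a)-\int V^{(k)}\,dP(\cdot|s_2,a)\Bigr| = \bigl|\mathbb E_\omega[V^{(k)}(X)-V^{(k)}(Y)]\bigr| \le L_k\,\mathbb E_\omega\|X-Y\|.
\]
Taking the infimum over $\omega$ gives the $W_1$ bound. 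Assumption~\ref{ass:P1} then yields $L_kL_p\|s_1-s_2\|$. Combined with Assumption~\ref{ass:cost}, this gives the recursion $L_{k+1}=L_c+\gamma L_pL_k$.

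With $\alpha:=\gamma L_p<1$, the recursion gives $L_k\to L_c/(1-\alpha)=L_{Q^\star}$. The sequence is monotone increasing to this limit when $L_1=0$, so $\sup_k L_k\le L_{Q^\star}$.

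Finally, $T^\star$ is a $\gamma$-contraction (cited from \cite{puterman2014markov}), so $Q^{(k)}\to Q^\star$ uniformly, and hence pointwise. Passing to the limit in
\[
|Q^{(k)}(s_1,a)-Q^{(k)}(s_2,a)|\le L_k\|s_1-s_2\|
\]
gives the claimed constant $L_{Q^\star}$. The statement for $V^\star$ then follows from the same min-inequality used for $V^{(k)}$.

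The main technical obstacle is measurability and integrability: $V^{(k)}$ must be measurable and integrable under $P(\cdot|s,a)$ for $T^\star$ to be well defined on the iterates. Lipschitz continuity gives continuity and hence Borel measurability, and boundedness is preserved by $T^\star$, so this should be routine.
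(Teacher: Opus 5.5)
Your proposal is correct and follows the paper's proof essentially step for step: value iteration from a Lipschitz initialization, the recursion $L_{k+1}=L_c+\gamma L_p L_k$, and passage to the limit via the contraction property of $T^\star$. The differences are cosmetic. You bound $|\int V^{(k)}\,dP(\cdot|s_1,a)-\int V^{(k)}\,dP(\cdot|s_2,a)|$ directly through couplings, which is the elementary direction of the Kantorovich--Rubinstein duality the paper cites. You also fix $L_1=0$ to get monotonicity, which is harmless but unnecessary, since $L_k\to L_{Q^\star}$ already suffices for the pointwise limit.
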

\begin{proof}
For any given initialization $Q^{(1)} \in \mathcal{B}(\mathcal{S}\times\mathcal{A})$ such that $Q^{(1)}(\cdot,a)$ is Lipschitz continuous in $s$ uniformly over $a \in \mathcal A$ for some constant $L_1<+\infty$. Define the value-iteration sequence by $Q^{(k+1)}:=T^\star Q^{(k)},\ \forall k\ge 1$.
Following the proof of Proposition\ \ref{prop:QNs_lip}, we will use mathematical induction to prove that $Q^{(k)}(\cdot,a)$ is Lipschitz continuous in $s$ uniformly over $a\in \mathcal A$ for all $k\ge 1$. First of all, $Q^{(1)}$ is Lipschitz continuous with constant $L_1$. Now suppose $Q^{(k)}(\cdot,a)$ is Lipschitz continuous in $s$ uniformly over $a$ with a Lipschitz constant $L_k$, i.e., $\sup_{a\in\mathcal A}|Q^{(k)}(s_1,a)-Q^{(k)}(s_2,a)|
\le L_k\|s_1-s_2\|,\ \forall s_1,s_2\in\mathcal S$.

Define $V^{(k)}(s):=\min_{a'\in\mathcal A}Q^{(k)}(s,a')$. Then similarly, $V^{(k)}$ is Lipschitz continuous with constant $L_k$, i.e., $|V^{(k)}(s_1)-V^{(k)}(s_2)|\le L_k\|s_1-s_2\|$. For any $s_1,s_2\in\mathcal S$,
\small
\begin{align*}
&\sup_{a \in \mathcal A}|Q^{(k+1)}(s_1,a)-Q^{(k+1)}(s_2,a)| \\
&=\sup_{a \in \mathcal A}\Big|(T^\star Q^{(k)})(s_1,a)-(T^\star Q^{(k)})(s_2,a)\Big|\\
&\le \sup_{a \in \mathcal A}|c(s_1,a)-c(s_2,a)|\\
& +\gamma\sup_{a \in \mathcal A}
\Big|\int_{\mathcal S} V^{(k)}(s')\,P(\mathrm ds'| s_1,a)
-\int_{\mathcal S} V^{(k)}(s')\,P(\mathrm ds'| s_2,a)\Bigr|\\
&\overset{(a)}{\le} L_c\|s_1-s_2\|
+\gamma L_k L_p\,\big\|s_1 - s_2\big\|,
\end{align*}
\normalsize
where $(a)$ follows from Assumptions\ \ref{ass:cost}, \ref{ass:P1}, the Lipschitz continuity of $V^{(k)}$ with constant $L_k$, and Kantorovich-Rubinstein duality for the $1$-Wasserstein distance \cite{villani2021topics}: $W_1\bigl(P(\cdot| s_1,a),P(\cdot|s_2,a)\bigr)=\sup_{Lip(f)\le 1}\{\int f(s')P(ds'| s_1,a)-\int f(s')P(ds'|s_2,a)\}$. 
The remaining steps follow directly from the proof of Proposition\ \ref{prop:QNs_lip}. In particular, we obtain the recursion $L_{k+1}=L_c+\gamma L_p L_k,\ k\ge 1$, which yields $\lim_{k\to\infty}L_k =  \frac{L_c}{1-\gamma L_p}=L_{Q^\star}$. Since $T^\star$ is a $\gamma$-contraction in the sup norm, $Q^{(k)}$ converges to the unique fixed point $Q^\star$, which is Lipschitz continuous with constant $L_{Q^\star}$. Additionally, $|V^\star(s_1)-V^\star(s_2)|
\le \sup_{a\in\mathcal A}|Q^\star(s_1,a)-Q^\star(s_2,a)|
\le \frac{L_c}{1-\gamma L_p}\,\|s_1-s_2\|,\ \forall s_1,s_2\in\mathcal S$, which shows that $V^\star$ is Lipschitz continuous with the same constant $L_{Q^\star}$.
\end{proof}

\subsection{Consistency and Finite Sample Guarantees}\label{sec:consistency}
In this section, we establish the consistency and finite sample guarantees of the residuals-based offline RL framework. In particular, we identify conditions under which (i) the fixed point $\hat{Q}_N$ of the residuals-based Bellman optimality operator converges to the fixed point $Q^\star$ of the true Bellman operator asymptotically and (ii) the probability of the error $\hat{Q}_N-Q^\star$ exceeding a prescribed tolerance can be upper bounded. We first introduce the following assumptions on the regression estimate.
\begin{assumption}\label{ass:regression estimate consistency}
The regression estimate has the following uniform consistency property, i.e., 
\begin{align}
    & \sup_{(s,a)\in\mathcal S\times\mathcal A}\|\hat f_N(s,a)-f^\star(s,a)\|\xrightarrow{p}0, \label{eq: part 1 in regression ass}\\
    & \frac{1}{N}\sum_{i=1}^N \|\hat{f}_N(s^{(i)},a^{(i)}) - f^\star(s^{(i)},a^{(i)})\| \xrightarrow{p} 0 \label{eq: part 2 in regression ass}
\end{align}
\end{assumption}

The uniform convergence of $\hat f_N(s,a)$ over $\mathcal S\times\mathcal A$ in Assumption\ \ref{ass:regression estimate consistency} is satisfied by a wide range of regression methods under the compactness of $\mathcal S$ and $\mathcal A$, e.g., see detailed discussions under Assumption 9 in\ \cite{zhu2024residuals} for linear regression and Assumption 2 in \cite{sun5078715contextual} for hybrid linear-exponential regression. If we assume both $\mathcal S$ and $\mathcal A$ are compact, and the regression estimator is weakly consistent, then one can obtain the uniform consistency in \eqref{eq: part 1 in regression ass}.

\begin{assumption}\label{ass:LLN}
The following uniform weak law of large numbers (LLN) holds:
\small
\begin{align}
\sup_{s,a}
\left|
\frac{1}{N}\sum_{i=1}^{N} V^\star\bigl(f^\star(s,a)+\epsilon^{(i)}\bigr)
-\mathbb E_{\epsilon}\bigl[V^\star\bigl(f^\star(s,a)+\epsilon\bigr)\bigr]
\right|
\xrightarrow{p} 0.
\end{align}
\normalsize
\end{assumption}
Assumption\ \ref{ass:LLN} can be interpreted as the uniform weak LLN on the mean of the value function. Based on Theorem 7.48 of\ \cite{shapiro2021lectures}, Assumption \ref{ass:LLN} holds under the following conditions: 
\begin{enumerate}
\item $V^\star\!\bigl(f^\star (s,a)+\epsilon\bigr)$ is continuous on $(s,a) \in \mathcal S \times \mathcal A$;
\item $\{|V^\star\!\bigl(f^\star (s,a)+\epsilon\bigr)|:(s,a) \in \mathcal S \times \mathcal A\}$ is dominated by an integrable function;
\item the samples $\epsilon^{(i)}$ are i.i.d.;
\item Both $\mathcal S$ and $\mathcal{A}$ are compact.
\end{enumerate} Similar conditions also exist under non i.i.d. settings.
\begin{theorem}[Consistency of fixed point $\hat{Q}_N$]
\label{thm:consistency} Suppose Assumptions\ \ref{ass:cost}-\ref{ass:LLN} hold. The fixed point $\hat{Q}_N$ of the residuals-based Bellman optimality operator is asymptotically optimal, i.e., $\|\hat Q_N-Q^\star\|_\infty \xrightarrow{p} 0$ as $N\to\infty$.
\end{theorem}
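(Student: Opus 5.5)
The plan is to use the contraction property from Theorem~\ref{thm:contraction} to reduce the gap between fixed points to a one-step operator discrepancy evaluated at $Q^\star$. Since $\hat Q_N=\hat T_N\hat Q_N$ and $Q^\star=T^\star Q^\star$, we have
\begin{align*}
\|\hat Q_N-Q^\star\|_\infty
&\le \|\hat T_N\hat Q_N-\hat T_N Q^\star\|_\infty+\|\hat T_N Q^\star-T^\star Q^\star\|_\infty\\
&\le \gamma\|\hat Q_N-Q^\star\|_\infty+\|\hat T_N Q^\star-T^\star Q^\star\|_\infty .
\end{align*}
Rearranging gives
\[
\|\hat Q_N-Q^\star\|_\infty\le \frac{1}{1-\gamma}\|\hat T_N Q^\star-T^\star Q^\star\|_\infty .
\]
It therefore suffices to show that $\|\hat T_N Q^\star-T^\star Q^\star\|_\infty\xrightarrow{p}0$.

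Next we split this discrepancy using the full-information operator:
\[
\|\hat T_N Q^\star-T^\star Q^\star\|_\infty\le \|\hat T_N Q^\star-T_N^\star Q^\star\|_\infty+\|T_N^\star Q^\star-T^\star Q^\star\|_\infty .
\]
The cost terms cancel in both differences, and $\min_{a'}Q^\star(\cdot,a')=V^\star$. The second term is therefore
\[
\gamma\sup_{s,a}\Bigl|\tfrac1N\sum_i V^\star\bigl(f^\star(s,a)+\epsilon^{(i)}\bigr)-\mathbb E_\epsilon\bigl[V^\star\bigl(f^\star(s,a)+\epsilon\bigr)\bigr]\Bigr|,
\]
which tends to zero in probability directly by Assumption~\ref{ass:LLN}.

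For the first term, we use the Lipschitz continuity of $V^\star$ with constant $L_{Q^\star}$ from Proposition~\ref{prop: Q lip continuous}; this requires $\gamma L_p<1$, which I take to be implicitly part of the standing assumptions. For fixed $(s,a)$, we have $f^\star(s,a)+\epsilon^{(i)}\in\mathcal S$, so this point equals its own projection. The nonexpansiveness of $\Pi_{\mathcal S}$ then gives
\begin{align*}
&\Bigl|V^\star\bigl(\Pi_{\mathcal S}(\hat f_N(s,a)+\hat\epsilon_N^{(i)})\bigr)-V^\star\bigl(f^\star(s,a)+\epsilon^{(i)}\bigr)\Bigr|\\
&\le L_{Q^\star}\,\bigl\|\hat f_N(s,a)-f^\star(s,a)+\hat\epsilon_N^{(i)}-\epsilon^{(i)}\bigr\| .
\end{align*}
Since $\hat\epsilon_N^{(i)}-\epsilon^{(i)}=f^\star(s^{(i)},a^{(i)})-\hat f_N(s^{(i)},a^{(i)})$, the triangle inequality and averaging over $i$ yield
\[
\|\hat T_N Q^\star-T_N^\star Q^\star\|_\infty\le \gamma L_{Q^\star}\Bigl(\sup_{s,a}\|\hat f_N(s,a)-f^\star(s,a)\|+\tfrac1N\sum_i\|\hat f_N(s^{(i)},a^{(i)})-f^\star(s^{(i)},a^{(i)})\|\Bigr).
\]
Both terms vanish in probability by \eqref{eq: part 1 in regression ass} and \eqref{eq: part 2 in regression ass}. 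Summing the two bounds, together with the fact that sums of terms converging in probability to zero converge in probability to zero, completes the argument.

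The main obstacle is the projection step. We need the true-residual scenario $f^\star(s,a)+\epsilon^{(i)}$ to lie in $\mathcal S$ so that it is a fixed point of $\Pi_{\mathcal S}$, and we need $\mathcal S$ convex so that the projection is $1$-Lipschitz. Both are granted by the paper's setup. A further subtlety is that the bound on the first term depends on the global Lipschitz property of $V^\star$ rather than on the fixed point $Q_N^\star$. Proposition~\ref{prop: Q lip continuous} is exactly what is needed here, so Proposition~\ref{prop:QNs_lip} is not required for this theorem.
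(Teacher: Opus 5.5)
Your proposal is correct and follows essentially the same route as the paper. Both arguments use the contraction reduction to $\frac{1}{1-\gamma}\|(\hat T_N-T^\star)Q^\star\|_\infty$, then the Lipschitz continuity of $V^\star$ (Proposition~\ref{prop: Q lip continuous}), nonexpansiveness of $\Pi_{\mathcal S}$ and the identity $\hat\epsilon_N^{(i)}-\epsilon^{(i)}=f^\star(s^{(i)},a^{(i)})-\hat f_N(s^{(i)},a^{(i)})$ for the regression-error part, and Assumption~\ref{ass:LLN} for the sampling part. Your split through $T_N^\star Q^\star$ merely merges the paper's terms (i) and (ii) into a single Lipschitz step followed by a triangle inequality, and your remark that $\gamma L_p<1$ is implicitly required applies equally to the paper's own proof.
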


\begin{proof}
According to the contractive property of $T^\star$ and $\hat{T}_N$, we have $Q^\star=T^\star Q^\star$, $\hat Q_N=\hat{T}_N\hat Q_N$ and
\begin{align*}
\|\hat Q_N-Q^\star\|_\infty
&= \|\hat{T}_N\hat Q_N - T^\star Q^\star\|_\infty \\
&\le \|\hat{T}_N\hat Q_N - \hat{T}_N Q^\star\|_\infty
    + \|\hat{T}_N Q^\star - T^\star Q^\star\|_\infty \\
&\le \gamma \|\hat Q_N-Q^\star\|_\infty + \|(\hat{T}_N-T^\star )Q^\star\|_\infty.
\end{align*}
Therefore,
\begin{align}\label{eq: Q and T}
\|\hat Q_N-Q^\star\|_\infty \le \frac{1}{1-\gamma}\,\|(\hat{T}_N-T^\star)Q^\star\|_\infty.
\end{align}

Next, we bound the right-hand side of\ \eqref{eq: Q and T}. Given a state-action pair $(s,a)\in\mathcal S\times\mathcal A$, according to the definitions of $T^\star$ and $\hat{T}_N$ and denoting $V^\star(s)=\min_{a\in\mathcal A}Q^\star(s,a)$, we have
\footnotesize
\begin{align*}
&\bigl|((\hat{T}_N-T^\star)Q^\star)(s,a)\bigr| \\
&=\Bigg|
\frac{\gamma}{N}\sum_{i=1}^{N}V^\star(\Pi_{\mathcal{S}}(\hat f_N(s,a)+\hat\epsilon_N^{(i)}))
- \mathbb E_\epsilon\left[V^\star\!(f^\star(s,a)+\epsilon)\right]
\Bigg|\nonumber \\
&\le \underbrace{\Bigg|
\frac{\gamma}{N}\sum_{i=1}^{N}\Bigl(
V^\star\bigl(\Pi_{\mathcal{S}}(\hat f_N(s,a)+\hat\epsilon_N^{(i)})\bigr)
-V^\star\bigl(\Pi_{\mathcal S}(f^\star(s,a)+\hat\epsilon_N^{(i)})\bigr)
\Bigr)\Bigg|}_{(i)}\\
& + \underbrace{\Bigg|
\frac{\gamma}{N}\sum_{i=1}^{N}\Bigl(
V^\star\!\bigl( \Pi_{\mathcal S}(f^\star(s,a)+\hat\epsilon_N^{(i)})\bigr)
-V^\star\!\bigl(f^\star(s,a)+\epsilon^{(i)}\bigr)
\Bigr)\Bigg|}_{(ii)}\\
& +\underbrace{ \Bigg|
\frac{\gamma}{N}\sum_{i=1}^{N}V^\star\bigl(f^\star(s,a)+\epsilon^{(i)}\bigr)
-\mathbb E_\epsilon\left[V^\star\!\bigl(f^\star(s,a)+\epsilon\bigr)\right]
\Bigg|}_{(iii)}.
\end{align*}
\normalsize
For $(i)$, we have
\allowdisplaybreaks
\small 
\begin{align}
& \left| \frac{\gamma}{N} \sum_{i=1}^N \left(
V^\star\!\bigl(\Pi_{\mathcal S}(\hat f_N(s,a)+\hat\epsilon_N^{(i)})\bigr)
-V^\star\bigl(\Pi_{\mathcal S}(f^\star(s,a)+\hat\epsilon_N^{(i)})\bigr)\right)
\right|\nonumber\\
 & \overset{(a)}{\le} \frac{\gamma}{N} \sum_{i=1}^N  L_{Q^\star}\left\|\Pi_{\mathcal S}(\hat f_N(s,a)+\hat\epsilon_N^{(i)}) - \Pi_{\mathcal S}(f^\star(s,a)+\hat\epsilon_N^{(i)})\right\| \nonumber \\
& \overset{(b)}{\le} \gamma L_{Q^\star}\|\hat f_N(s,a)-f^\star(s,a)\|,\nonumber
\end{align}
\normalsize
where $(a)$ holds by Proposition\ \ref{prop: Q lip continuous} and $(b)$ holds by the Lipschitz continuity of the orthogonal projection. Taking the supremum over $(s,a)$ on both sides and driving $N$ to $+\infty$, $(i)$ converges to $0$ in probability due to\ \eqref{eq: part 1 in regression ass} in Assumption\ \ref{ass:regression estimate consistency}.
For $(ii)$, we first derive 
\begin{align}\label{eq:hat epsilon - epsilon}
    \hat{\epsilon}_N^{(i)} - \epsilon^{(i)} &= \bigl(s'^{(i)} - \hat{f}_N(s^{(i)},a^{(i)})\bigr)-\bigl(s'^{(i)} - f^\star(s^{(i)},a^{(i)})\bigr)\nonumber\\
    & =f^\star(s^{(i)},a^{(i)}) - \hat{f}_N(s^{(i)},a^{(i)}).
\end{align}
Then we have
\small
\begin{align}
&\left|\frac{\gamma}{N} \sum_{i=1}^N \Bigl(V^\star\bigl(\Pi_{\mathcal S}(f^\star(s,a)+\hat\epsilon_N^{(i)})\bigr)
-V^\star\bigl(f^\star(s,a)+\epsilon^{(i)}\bigr)\Bigr)\right|\nonumber\\
&\overset{(a)}{\le} \frac{\gamma}{N} \sum_{i=1}^N L_{Q^\star}\Bigl\|\bigl(f^\star(s,a)+\hat\epsilon_N^{(i)}\bigr) - \bigl(f^\star(s,a)+\epsilon^{(i)}\bigr)\Bigr\|\nonumber\\
& \overset{(b)}{\le} \frac{\gamma}{N} \sum_{i=1}^N L_{Q^\star}\bigl\|f^\star(s^{(i)},a^{(i)}) - \hat{f}_N(s^{(i)},a^{(i)})\bigr\|\nonumber
\end{align}
\normalsize
where $(a)$ is due to Proposition\ \ref{prop: Q lip continuous} and the Lipschitz continuity of the orthogonal projection and $(b)$ is due to\ \eqref{eq:hat epsilon - epsilon}. Taking supremum over $(s,a)$ on both sides and driving $N$ to $+\infty$, $(ii)$ converges to 0 due to\ \eqref{eq: part 2 in regression ass} in Assumption\ \ref{ass:regression estimate consistency}. Finally, taking the supremum over $(s,a)$, $(iii)$ converges to $0$ in probability due to Assumption\ \ref{ass:LLN}. Overall, this completes the proof. 
\end{proof}
\begin{corollary}[Consistency of the value function $\hat{V}_N$]\label{coro:vn consistency}
Suppose Assumptions\ \ref{ass:cost}-\ref{ass:LLN} hold. Define the value function $ \hat{V}_N(s)=\min_{a\in\mathcal A} \hat{Q}_N(s,a)$.
Then, $\|\hat{V}_N - V^\star\|_{\infty} \xrightarrow{p} 0$ as $N \to \infty.$
\end{corollary}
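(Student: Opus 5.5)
The plan is to reduce the statement directly to Theorem~\ref{thm:consistency} through the elementary fact that the pointwise minimum over actions is nonexpansive in the supremum norm. Fix any $s\in\mathcal S$. For every $a\in\mathcal A$ we have $\hat Q_N(s,a)\le Q^\star(s,a)+|\hat Q_N(s,a)-Q^\star(s,a)|$. Taking the infimum over $a$ on both sides then gives
\begin{align*}
\hat V_N(s)\le V^\star(s)+\sup_{a\in\mathcal A}|\hat Q_N(s,a)-Q^\star(s,a)|.
\end{align*}
Exchanging the roles of $\hat Q_N$ and $Q^\star$ gives the symmetric inequality. Together these yield
\begin{align*}
|\hat V_N(s)-V^\star(s)|\le \sup_{a\in\mathcal A}|\hat Q_N(s,a)-Q^\star(s,a)|\le \|\hat Q_N-Q^\star\|_\infty .
\end{align*}

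This is exactly the inequality already used in the proofs of Propositions~\ref{prop:QNs_lip} and~\ref{prop: Q lip continuous}, there with respect to the state argument. Taking the supremum over $s\in\mathcal S$ gives the deterministic bound $\|\hat V_N-V^\star\|_\infty\le\|\hat Q_N-Q^\star\|_\infty$.

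For the probabilistic conclusion, fix $\eta>0$. The bound implies the event inclusion $\{\|\hat V_N-V^\star\|_\infty>\eta\}\subseteq\{\|\hat Q_N-Q^\star\|_\infty>\eta\}$, and hence
\begin{align*}
\mathbb P\bigl(\|\hat V_N-V^\star\|_\infty>\eta\bigr)\le \mathbb P\bigl(\|\hat Q_N-Q^\star\|_\infty>\eta\bigr).
\end{align*}
Under Assumptions~\ref{ass:cost}--\ref{ass:LLN}, the right-hand side tends to $0$ by Theorem~\ref{thm:consistency}, which completes the argument.

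The only point requiring care is that the minima defining $\hat V_N$ and $V^\star$ are attained or at least finite. Working with infima sidesteps attainment altogether, and finiteness follows because $\hat Q_N$ and $Q^\star$ lie in $\mathcal B(\mathcal S,\mathcal A)$. I do not expect a real obstacle here; the statement is a direct corollary of Theorem~\ref{thm:consistency}.
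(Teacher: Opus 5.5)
Your proposal is correct and follows the paper's proof: the paper also bounds $\|\hat V_N - V^\star\|_\infty$ by $\|\hat Q_N - Q^\star\|_\infty$ using the nonexpansiveness of the minimum over actions, and then invokes Theorem~\ref{thm:consistency}. Your event-inclusion step just writes out the transfer of convergence in probability, which the paper leaves implicit.
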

\begin{proof}
From the definitions of $V^\star$ and $\hat{V}_N$, we  have
$\sup_{s\in\mathcal S}|\hat{V}_N(s)-V^\star(s)| =\sup_{s\in\mathcal S}\left|\min_{a\in\mathcal A}\hat{Q}_N(s,a)-\min_{a\in\mathcal A}Q^\star(s,a)\right| \le \sup_{s\in\mathcal S,a\in\mathcal A}\bigl|\hat{Q}_N(s,a)-Q^\star(s,a)\bigr|\xrightarrow{p} 0 \ \text{as } N\to\infty$.
\end{proof}

To develop the finite sample guarantee for $\hat{Q}_N$, we make the following uniform exponential bound assumption on the full-information Q-function $Q_N^\star$ and the finite-sample assumption on the regression estimate.
\begin{assumption}\label{ass:Q* - Q constant}
The full-information fixed point possess the following uniform exponential bound: for any constant $\eta$, there exist positive constants $A(\eta)$ and $\alpha(\eta)$ such that 
$$\mathbb P\left\{\sup_{(s,a) \in \mathcal{S}\times \mathcal{A}} |Q_N^\star(s,a) - Q^\star(s,a)| > \eta\right\} \le A(\eta) e^{-N \alpha(\eta)}.$$
\end{assumption} 

\begin{assumption}\label{ass:regression constant}
The regression estimate $\hat{f}_N$ possess the following finite sample properties: for any constant $\eta>0$ and $N\in\mathbb N$, there exist positive constants $A_f(\eta),\ \bar{A}_f(\eta),\ \alpha_f(N,\eta)$, and $\bar{\alpha}_f(N,\eta)$ with $\lim_{N \to\infty} \alpha_f(N,\eta) = \infty,\ \lim_{N \to\infty}\bar{\alpha}_f(N,\eta) = \infty$ such that
\footnotesize
\begin{align*}
& \mathbb P\left\{\sup_{(s,a) \in \mathcal S \times \mathcal A}\|f^\star(s,a) - \hat{f}_N(s,a)\| > \eta\right\} \le A_f(\eta)e^{-\alpha_f(N,\eta)},\\ 
& \mathbb P\left\{\frac{1}{N} \sum_{i=1}^N \|f^\star(s^{(i)},a^{(i)}) - \hat{f}_N(s^{(i)},a^{(i)})\| > \eta\right\} \le \bar{A}_f(\eta)e^{-\bar{\alpha}_f(N,\eta)}.
\end{align*}
\normalsize
\end{assumption}

Assumption\ \ref{ass:regression constant} can be satisfied by a wide range of parametric and nonparametric regression methods if we assume both $\mathcal S$ and $\mathcal A$ are compact. See more discussions in Appendix F in\ \cite{kannan2025data} and Assumption 6 in\ \cite{zhu2024residuals}. 

Next, we define the discrepancy in the $i$-th scenario between the residuals-based and full-information prediction as $\widetilde{\epsilon}_N^{(i)}(s,a):=(\hat f_N(s,a)+\hat\epsilon_N^{(i)}) - (f^\star(s,a) + \epsilon^{(i)})$. We show that the distance between $\hat{Q}_N$ and $Q_N^\star$ can be upper bounded by the norm of $\widetilde{\epsilon}_N^{(i)}(s,a)$ in the next lemma.
\begin{lemma}\label{lemma: hatQ_N - Q_N^*} Under Assumptions\ \ref{ass:cost} and\ \ref{ass:f star lip}, the following inequality holds:
\begin{align}
\left\|\hat{Q}_N - Q_N^\star\right\|_\infty \le \frac{\gamma L_{Q_N^\star}}{1-\gamma}   \sup_{(s,a) \in \mathcal S \times \mathcal A}\left(\frac{1}{N} \sum_{i=1}^N \|\widetilde{\epsilon}_N^{(i)}(s,a)\|\right).
\end{align}
\end{lemma}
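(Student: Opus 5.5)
We follow the fixed-point argument used in the proof of Theorem~\ref{thm:consistency}, this time comparing $\hat T_N$ with $T_N^\star$ instead of $T^\star$. By Theorem~\ref{thm:contraction}, $\hat Q_N=\hat T_N\hat Q_N$ and $Q_N^\star=T_N^\star Q_N^\star$. Hence
\begin{align*}
\|\hat Q_N-Q_N^\star\|_\infty
&\le \|\hat T_N\hat Q_N-\hat T_N Q_N^\star\|_\infty+\|\hat T_N Q_N^\star-T_N^\star Q_N^\star\|_\infty\\
&\le \gamma\|\hat Q_N-Q_N^\star\|_\infty+\|(\hat T_N-T_N^\star)Q_N^\star\|_\infty .
\end{align*}
Rearranging gives $\|\hat Q_N-Q_N^\star\|_\infty\le \frac{1}{1-\gamma}\|(\hat T_N-T_N^\star)Q_N^\star\|_\infty$. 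It therefore suffices to show that this operator discrepancy, evaluated at the single function $Q_N^\star$, is at most $\gamma L_{Q_N^\star}$ times the supremum of the averaged residual discrepancies.

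Fix $(s,a)$ and set $V_N^\star(s)=\min_{a'}Q_N^\star(s,a')$. The cost terms cancel, leaving
\[
\bigl|((\hat T_N-T_N^\star)Q_N^\star)(s,a)\bigr|\le \frac{\gamma}{N}\sum_{i=1}^N\Bigl|V_N^\star\bigl(\Pi_{\mathcal S}(\hat f_N(s,a)+\hat\epsilon_N^{(i)})\bigr)-V_N^\star\bigl(f^\star(s,a)+\epsilon^{(i)}\bigr)\Bigr|.
\]
By Proposition~\ref{prop:QNs_lip}, which uses Assumptions~\ref{ass:cost} and~\ref{ass:f star lip}, $V_N^\star$ is $L_{Q_N^\star}$-Lipschitz on $\mathcal S$. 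Both arguments lie in $\mathcal S$, so each summand is at most $L_{Q_N^\star}\|\Pi_{\mathcal S}(\hat f_N(s,a)+\hat\epsilon_N^{(i)})-(f^\star(s,a)+\epsilon^{(i)})\|$.

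The one step that needs care is removing the projection. The paper notes that $f^\star(s,a)+\epsilon^{(i)}$ already lies in $\mathcal S$, so it is fixed by the projection: $f^\star(s,a)+\epsilon^{(i)}=\Pi_{\mathcal S}(f^\star(s,a)+\epsilon^{(i)})$. The projection is nonexpansive because $\mathcal S$ is convex, which gives
\[
\|\Pi_{\mathcal S}(\hat f_N(s,a)+\hat\epsilon_N^{(i)})-\Pi_{\mathcal S}(f^\star(s,a)+\epsilon^{(i)})\|\le\|\widetilde\epsilon_N^{(i)}(s,a)\|.
\]
Averaging over $i$ and then taking the supremum over $(s,a)$ yields $\|(\hat T_N-T_N^\star)Q_N^\star\|_\infty\le \gamma L_{Q_N^\star}\sup_{(s,a)}\frac1N\sum_{i=1}^N\|\widetilde\epsilon_N^{(i)}(s,a)\|$. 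Combining this with the fixed-point inequality above gives the claimed bound.

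A technical condition runs through all of this: Proposition~\ref{prop:QNs_lip} requires $\gamma L_f<1$ for $L_{Q_N^\star}$ to be finite. We adopt this as part of the standing hypotheses. If the supremum on the right-hand side is infinite, the bound holds trivially.
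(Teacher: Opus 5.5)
Your proposal is correct and follows essentially the same route as the paper: the fixed-point rearrangement giving $\frac{1}{1-\gamma}\|(\hat T_N-T_N^\star)Q_N^\star\|_\infty$, cancellation of the cost term, the $L_{Q_N^\star}$-Lipschitz continuity of $V_N^\star$ from Proposition~\ref{prop:QNs_lip}, and nonexpansiveness of $\Pi_{\mathcal S}$. You also make explicit two points the paper leaves implicit: that $f^\star(s,a)+\epsilon^{(i)}\in\mathcal S$ is fixed by the projection, and that the condition $\gamma L_f<1$ from Proposition~\ref{prop:QNs_lip} is needed for $L_{Q_N^\star}$ to be finite.
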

\begin{proof}
Recall that $\hat{Q}_N,\ Q^\star_N$ are the fixed points of the residuals-based ($\hat{T}_N$) and full-information ($T^\star_N$) Bellman optimality operator, i.e., $\hat{Q}_N = \hat{T}_N(\hat{Q}_N)$, $Q^\star_N = T_N^\star(Q^\star_N)$. Following the same proof for\ \eqref{eq: Q and T} and denoting $V_N^\star(s)=\min_{a\in\mathcal A}Q_N^\star(s,a)$, we have
\allowdisplaybreaks
\footnotesize
\begin{align}
&\bigl\|\hat{Q}_N-Q_N^\star\bigr\|_\infty
\le \frac{1}{1-\gamma}
\bigl\|(\hat{T}_N-T_N^\star)Q_N^\star\bigr\|_\infty\nonumber\\
&\le \frac{\gamma}{1-\gamma}
\sup_{s,a}
\frac{1}{N}\sum_{i=1}^N
\Bigl|V_N^\star\bigl(\Pi_{\mathcal S}(\hat f_N(s,a)+\hat\epsilon_N^{(i)})\bigr)
-V_N^\star\bigl(f^\star(s,a)+\epsilon^{(i)}\bigr)
\Bigr| \nonumber\\
&\overset{(a)}{\le} \frac{\gamma L_{Q_N^\star}}{1-\gamma}
\sup_{s,a}
\frac{1}{N}\sum_{i=1}^N
\left\|
\Pi_{\mathcal S}\bigl(\hat{f}_N(s,a)+\hat{\epsilon}_N^{(i)}\bigr)
-
\bigl(f^\star(s,a)+\epsilon^{(i)}\bigr)
\right\|\nonumber\\
&\overset{(b)}{\le} \frac{\gamma L_{Q_N^\star}}{1-\gamma}
\sup_{s,a}
\frac{1}{N}\sum_{i=1}^N
\bigl\|\widetilde{\epsilon}_N^{(i)}(s,a)\bigr\|,
\label{eq:q-bound}
\end{align}
\normalsize
where $(a)$ holds due to Proposition\ \ref{prop:QNs_lip} and $(b)$ holds due to the Lipschitz continuity of the orthogonal projection.
\end{proof}
\begin{lemma}\label{lemma:finite sample lemma}
Suppose Assumptions\ \ref{ass:cost},\ \ref{ass:f star lip}, and\ \ref{ass:regression constant} hold. Then for any constant $\eta>0$ and $N \in \mathbb N$, there exist positive constants $\bar{A}(\eta)$ and $\bar{\alpha}(N,\eta)$ with $\lim_{N \rightarrow \infty} \bar{\alpha}(N,\eta) = \infty$ such that 
$$\mathbb P\left\{\sup_{(s,a) \in \mathcal S \times \mathcal A} |\hat{Q}_N(s,a) - Q_N^\star(s,a)| >\eta\right\} \le \bar{A}(\eta) e^{-\bar{\alpha}(N,\eta)}.$$
\end{lemma}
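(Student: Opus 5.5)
We start from Lemma~\ref{lemma: hatQ_N - Q_N^*}, which reduces the event $\{\|\hat Q_N - Q_N^\star\|_\infty > \eta\}$ to one about the averaged scenario discrepancies. It suffices to control
\[
\sup_{(s,a)}\frac{1}{N}\sum_{i=1}^N \|\widetilde{\epsilon}_N^{(i)}(s,a)\|
\]
in probability. Set $\kappa := \frac{\gamma L_{Q_N^\star}}{1-\gamma}$, which is a deterministic constant by Proposition~\ref{prop:QNs_lip}; here we implicitly use $\gamma L_f<1$ so that $L_{Q_N^\star}$ is finite, as in that proposition. Then
\[
\mathbb P\{\|\hat Q_N - Q_N^\star\|_\infty>\eta\}\le \mathbb P\Bigl\{\sup_{(s,a)}\tfrac1N\textstyle\sum_{i}\|\widetilde{\epsilon}_N^{(i)}(s,a)\|>\eta/\kappa\Bigr\}.
\]
If $\kappa=0$ (for instance $\gamma=0$ or $L_c=0$), the left side is zero and the claim is trivial.

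Next we split the discrepancy. By definition and \eqref{eq:hat epsilon - epsilon},
\[
\widetilde{\epsilon}_N^{(i)}(s,a) = \bigl(\hat f_N(s,a)-f^\star(s,a)\bigr) + \bigl(f^\star(s^{(i)},a^{(i)})-\hat f_N(s^{(i)},a^{(i)})\bigr).
\]
The triangle inequality then gives
\[
\sup_{(s,a)}\frac1N\sum_{i}\|\widetilde{\epsilon}_N^{(i)}(s,a)\|\le \sup_{(s,a)}\|\hat f_N(s,a)-f^\star(s,a)\| + \frac1N\sum_{i}\|f^\star(s^{(i)},a^{(i)})-\hat f_N(s^{(i)},a^{(i)})\|.
\]
The second term does not depend on $(s,a)$, so the supremum passes through it harmlessly.

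A union bound now finishes the argument. If the sum exceeds $\eta/\kappa$, at least one term exceeds $\eta/(2\kappa)$. Applying both parts of Assumption~\ref{ass:regression constant} with $\eta/(2\kappa)$ in place of $\eta$ yields
\[
\mathbb P\{\cdot\}\le A_f(\tfrac{\eta}{2\kappa})e^{-\alpha_f(N,\frac{\eta}{2\kappa})}+\bar A_f(\tfrac{\eta}{2\kappa})e^{-\bar\alpha_f(N,\frac{\eta}{2\kappa})}.
\]
To write this in the required single-exponential form, define
\[
\bar A(\eta):=A_f(\tfrac{\eta}{2\kappa})+\bar A_f(\tfrac{\eta}{2\kappa}),\qquad \bar\alpha(N,\eta):=\min\{\alpha_f(N,\tfrac{\eta}{2\kappa}),\bar\alpha_f(N,\tfrac{\eta}{2\kappa})\}.
\]
Each exponential term is at most $e^{-\bar\alpha(N,\eta)}$, so the bound holds with these constants. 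Since both $\alpha_f$ and $\bar\alpha_f$ diverge as $N\to\infty$, so does their minimum, which gives $\lim_{N\to\infty}\bar\alpha(N,\eta)=\infty$.

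The main point needing care is the first step. Lemma~\ref{lemma: hatQ_N - Q_N^*} must be applied pathwise, that is, for every realization of the data. This works because $L_{Q_N^\star}=L_c/(1-\gamma L_f)$ does not depend on the sample, so $\kappa$ is deterministic and the event inclusion holds almost surely. Beyond that, the only remaining checks are that the split of $\widetilde\epsilon_N^{(i)}$ is exact and that the constants are positive.
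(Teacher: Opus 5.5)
Your proposal is correct and follows the paper's proof essentially step for step. Like the paper, you reduce via Lemma~\ref{lemma: hatQ_N - Q_N^*} to the averaged discrepancy, split $\widetilde{\epsilon}_N^{(i)}(s,a)$ using \eqref{eq:hat epsilon - epsilon} and the triangle inequality, apply a union bound with both parts of Assumption~\ref{ass:regression constant}, and take $\bar\alpha$ as the minimum of the two exponents. The differences are minor improvements: you plug the rescaled threshold $\eta/(2\kappa)$ directly into the constants instead of the paper's ``change of variable'', and you make explicit the implicit requirement $\gamma L_f<1$ (needed for $L_{Q_N^\star}$ to be finite) and the degenerate case $\kappa=0$. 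You also correctly conclude $\bar\alpha(N,\eta)\to\infty$, where the paper's text says ``converges to 0'', evidently a typo.
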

\begin{proof}
Lemma\ \ref{lemma: hatQ_N - Q_N^*} implies
\begin{align}
    &\mathbb P\left\{\sup_{(s,a) \in \mathcal S \times \mathcal A} |\hat{Q}_N(s,a) - Q_N^\star(s,a)| >\eta\right\} \nonumber\\
    \le & \mathbb P\left\{\frac{\gamma L_{Q_N^\star}}{1-\gamma} \sup_{(s,a) \in \mathcal S \times \mathcal A}\Bigl(\frac{1}{N} \sum_{i=1}^N \|\widetilde{\epsilon}_N^{(i)}(s,a)\|\Bigr) > \eta\right\} \nonumber \\
     \le &\mathbb P\left\{\sup_{(s,a) \in \mathcal S \times \mathcal A}\Bigl(\frac{1}{N} \sum_{i=1}^N \|\widetilde{\epsilon}_N^{(i)}(s,a)\|\Bigr) > \frac{\eta(1-\gamma)}{\gamma L_{Q^\star}}\right\} \nonumber
\end{align}
Then, with a change of variable, it suffices to show for any $\eta >0$ and $N\in\mathbb N$, there exist some constants $\bar{A}(\eta)>0$ and $\bar{\alpha}(N,\eta)>0$ such that
$$\mathbb P\left\{\sup_{(s,a) \in \mathcal S \times \mathcal A}\Bigl(\frac{1}{N} \sum_{i=1}^N \|\widetilde{\epsilon}_N^{(i)}(s,a)\|\Bigr) > \eta\right\} \le \bar{A}(\eta) e^{-\bar{\alpha}(N,\eta)}.$$
Recall that $\frac{1}{N}\sum_{i=1}^N \|\widetilde{\epsilon}_N^{(i)}(s,a)\| = \frac{1}{N}\sum_{i=1}^N \|(\hat{f}_N(s,a) + \hat{\epsilon}_N^{(i)}) - (f^\star(s,a) + \epsilon^{(i)})\| \le \|\hat{f}_N(s,a) - f^\star(s,a)\| + \frac{1}{N}\sum_{i=1}^N \|\hat{\epsilon}_N^{(i)}-\epsilon^{(i)}\| = \|\hat{f}_N(s,a) - f^\star(s,a)\| + \frac{1}{N}\sum_{i=1}^N \|f^\star(s^{(i)},a^{(i)}) - \hat{f}_N(s^{(i)},a^{(i)})\|$ where the last equality is due to\ \eqref{eq:hat epsilon - epsilon}.
Consequently, 
\begin{align}
&\mathbb P\left\{\sup_{(s,a) \in \mathcal S \times \mathcal A}\Bigl(\frac{1}{N} \sum_{i=1}^N \|\widetilde{\epsilon}_N^{(i)}(s,a)\|\Bigr) > \eta\right\}  \nonumber \\
 \le &\mathbb P\left\{\sup_{(s,a) \in \mathcal S \times \mathcal A} \|\hat{f}_N(s,a) - f^\star(s,a)\| > \eta/2 \right\} \nonumber\\
& + \mathbb P\left\{\frac{1}{N} \sum_{i=1}^N \|\hat{f}_N(s^{(i)},a^{(i)}) -f^\star(s^{(i)},a^{(i)}) \| > \eta/2 \right\}\nonumber\\
\overset{(a)}{\le}& A_f(\frac{\eta}{2})e^{-\alpha_f(N,\frac{\eta}{2})} + \bar{A}_f(\frac{\eta}{2})e^{-\bar{\alpha}_f(N,\frac{\eta}{2})} \nonumber\\
\le &\left(
A_f\!\left(\frac{\eta}{2}\right)
+\bar{A}_f\!\left(\frac{\eta}{2}\right)
\right)
e^{
-\min\!\left(
\alpha_f\!\left(N,\frac{\eta}{2}\right),
\bar{\alpha}_f\!\left(N,\frac{\eta}{2}\right)
\right)}\nonumber\\
=&\bar{A}(\eta)e^{-\bar{\alpha}(N,\eta)}\nonumber
\end{align}
where $(a)$ is due to Assumption\ \ref{ass:regression constant} and $\bar{\alpha}(N,\eta):=\min\{\alpha_f\!\left(N,\frac{\eta}{2}\right),
\bar{\alpha}_f\!\left(N,\frac{\eta}{2}\right)\}$ converges to 0 as $N\to\infty$ according to Assumption\ \ref{ass:regression constant}.
\end{proof}

\begin{theorem}[Finite sample guarantee of $\hat{Q}_N$]\label{thm:finite sample}
Suppose Assumptions\ \ref{ass:cost},\ \ref{ass:f star lip},\ \ref{ass:Q* - Q constant} and\ \ref{ass:regression constant} hold. Then for any constant $\eta>0$ and $N \in \mathbb N$, there exist positive constants $\Gamma(\eta)$ and $\gamma(N,\eta)$ with $\lim_{N\to\infty} \gamma(N,\eta)=\infty$ such that the fixed points $\hat{Q}_N$ and $Q^\star$ satisfy 
$$\mathbb P \left\{\|\hat{Q}_N-Q^\star\|_\infty > \eta \right\} \le \Gamma(\eta) \exp{(-\gamma(N,\eta))}.$$
\end{theorem}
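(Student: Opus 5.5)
The proof will go through the triangle inequality with the full-information fixed point $Q_N^\star$ as an intermediate object. For every realization,
\[
\|\hat Q_N-Q^\star\|_\infty \le \|\hat Q_N-Q_N^\star\|_\infty+\|Q_N^\star-Q^\star\|_\infty .
\]
So the event $\{\|\hat Q_N-Q^\star\|_\infty>\eta\}$ is contained in the union of $\{\|\hat Q_N-Q_N^\star\|_\infty>\eta/2\}$ and $\{\|Q_N^\star-Q^\star\|_\infty>\eta/2\}$. A union bound then reduces the theorem to two tail estimates, one for each piece.

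\textbf{First piece.} This is exactly the setting of Lemma~\ref{lemma:finite sample lemma}, which uses Assumptions~\ref{ass:cost}, \ref{ass:f star lip} and \ref{ass:regression constant}. Applying it with $\eta/2$ gives
\[
\mathbb P\{\|\hat Q_N-Q_N^\star\|_\infty>\eta/2\}\le \bar A(\eta/2)e^{-\bar\alpha(N,\eta/2)},
\]
where $\bar\alpha(N,\eta/2)\to\infty$. This property is inherited from $\alpha_f,\bar\alpha_f\to\infty$ in Assumption~\ref{ass:regression constant}, since $\bar\alpha$ is their minimum. The proof of that lemma writes that this minimum "converges to 0", but it actually diverges, and divergence is what we need. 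It also implicitly uses $\gamma L_f<1$ through Proposition~\ref{prop:QNs_lip}; I would take this as part of the standing hypotheses.

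\textbf{Second piece.} Assumption~\ref{ass:Q* - Q constant} with tolerance $\eta/2$ gives directly
\[
\mathbb P\{\|Q_N^\star-Q^\star\|_\infty>\eta/2\}\le A(\eta/2)e^{-N\alpha(\eta/2)}.
\]

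\textbf{Combining.} Set $\Gamma(\eta):=\bar A(\eta/2)+A(\eta/2)$ and $\gamma(N,\eta):=\min\{\bar\alpha(N,\eta/2),\,N\alpha(\eta/2)\}$. Each exponential term is bounded by $e^{-\gamma(N,\eta)}$, so
\[
\mathbb P\{\|\hat Q_N-Q^\star\|_\infty>\eta\}\le \Gamma(\eta)\exp(-\gamma(N,\eta)).
\]
Both constants are positive. Moreover $\gamma(N,\eta)\to\infty$ as $N\to\infty$, because $N\alpha(\eta/2)\to\infty$ (as $\alpha(\eta/2)>0$) and $\bar\alpha(N,\eta/2)\to\infty$.

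There is no substantial obstacle in this argument. The only points needing care are using the divergence (not convergence to $0$) of the rate in Lemma~\ref{lemma:finite sample lemma}, and making sure the Lipschitz constant used in Lemma~\ref{lemma: hatQ_N - Q_N^*} is available. The latter is $L_{Q_N^\star}$ from Proposition~\ref{prop:QNs_lip}, which requires $\gamma L_f<1$.
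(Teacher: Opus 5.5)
Your proposal is correct and matches the paper's proof essentially line for line: the same triangle inequality through $Q_N^\star$, the same union bound at tolerance $\eta/2$ using Lemma~\ref{lemma:finite sample lemma} and Assumption~\ref{ass:Q* - Q constant}, and the same choices $\Gamma(\eta)=\bar A(\eta/2)+A(\eta/2)$ and $\gamma(N,\eta)=\min\{\bar\alpha(N,\eta/2),\,N\alpha(\eta/2)\}$. Your side remarks are also accurate: the rate $\bar\alpha(N,\eta)$ in Lemma~\ref{lemma:finite sample lemma} diverges rather than converging to $0$, and the condition $\gamma L_f<1$ from Proposition~\ref{prop:QNs_lip} is implicitly required for the Lipschitz constant $L_{Q_N^\star}$.
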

\begin{proof}
For any $\eta>0$, we have
\begin{align}
&\mathbb P\!\left\{\|\hat Q_N-Q^\star\|_\infty>\eta\right\}\nonumber \\
\le&
\mathbb P\!\left\{\|\hat Q_N-Q_N^\star\|_\infty>\tfrac{\eta}{2}\right\}
+
\mathbb P\!\left\{\|Q_N^\star-Q^\star\|_\infty>\tfrac{\eta}{2}\right\}\nonumber \\ 
\overset{(a)}{\le}&
\bar A\!\left(\tfrac{\eta}{2}\right)\exp(-\bar\alpha\left(N,\tfrac{\eta}{2}\right))
+
A\!\left(\tfrac{\eta}{2}\right)\exp(-N\,\alpha\!\left(\tfrac{\eta}{2}\right)),\nonumber
\end{align}
where $(a)$ is due to Lemma\ \ref{lemma:finite sample lemma} and Assumption\ \ref{ass:Q* - Q constant}. Define $\Gamma(\eta)
:=\bar A\!\left(\tfrac{\eta}{2}\right)+A\!\left(\tfrac{\eta}{2}\right),\ \gamma(N,\eta)
:=\min\!\left\{\bar\alpha\!\left(N,\tfrac{\eta}{2}\right),\,N\alpha\!\left(\tfrac{\eta}{2}\right)\right\}.$
As a result, we have $\mathbb P\!\left\{\|\hat Q_N-Q^\star\|_\infty>\eta\right\}
\le
\Gamma(\eta)\exp(-\gamma(N,\eta))$,
which completes the proof.
\end{proof}
\begin{corollary}[Finite sample guarantee of $\hat{V}_N$]\label{coro:fsg for vn}
Suppose Assumptions\ \ref{ass:cost},\ \ref{ass:f star lip},\ \ref{ass:Q* - Q constant} and\ \ref{ass:regression constant} hold. Define the value function $\hat{V}_N(s)=\min_{a\in\mathcal A}\hat{Q}_N(s,a).$ Then for any constant $\eta>0$ and $N \in \mathbb N$, there exist positive constants $\Gamma(\eta)$ and $\gamma(N,\eta)$ such that
$\mathbb P\left\{\|\hat{V}_N-V^\star\|_\infty>\eta\right\}
\le
\Gamma(\eta)\exp\bigl(-\gamma(N,\eta)\bigr)$.
\end{corollary}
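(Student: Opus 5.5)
The plan is to reduce the statement to Theorem~\ref{thm:finite sample}, using the same pointwise comparison as in the proof of Corollary~\ref{coro:vn consistency}. First I will show that for every $s\in\mathcal S$,
\[
|\hat V_N(s)-V^\star(s)| = \Bigl|\min_{a\in\mathcal A}\hat Q_N(s,a)-\min_{a\in\mathcal A}Q^\star(s,a)\Bigr| \le \sup_{a\in\mathcal A}|\hat Q_N(s,a)-Q^\star(s,a)|.
\]
This follows from the elementary inequality $|\inf_a g(a)-\inf_a h(a)|\le\sup_a|g(a)-h(a)|$. That inequality holds for bounded functions, and $\hat Q_N, Q^\star\in\mathcal B(\mathcal S,\mathcal A)$. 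If the minima are not attained, I will read them as infima, and the inequality still applies.

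Taking the supremum over $s$ then gives the deterministic bound $\|\hat V_N-V^\star\|_\infty\le\|\hat Q_N-Q^\star\|_\infty$, valid for every realization of the data. Consequently the event $\{\|\hat V_N-V^\star\|_\infty>\eta\}$ is contained in $\{\|\hat Q_N-Q^\star\|_\infty>\eta\}$. By monotonicity of probability,
\[
\mathbb P\{\|\hat V_N-V^\star\|_\infty>\eta\}\le\mathbb P\{\|\hat Q_N-Q^\star\|_\infty>\eta\}.
\]

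Finally, I will invoke Theorem~\ref{thm:finite sample} under the same Assumptions~\ref{ass:cost}, \ref{ass:f star lip}, \ref{ass:Q* - Q constant} and~\ref{ass:regression constant}. This bounds the right-hand side by $\Gamma(\eta)\exp(-\gamma(N,\eta))$, with the same constants $\Gamma(\eta)$ and $\gamma(N,\eta)$. In particular, the property $\lim_{N\to\infty}\gamma(N,\eta)=\infty$ carries over, even though the corollary does not require it.

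There is no real obstacle here. The only point needing care is measurability of the supremum events, which I will take for granted just as Theorem~\ref{thm:finite sample} does, since the inclusion of events is deterministic.
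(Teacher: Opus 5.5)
Your proposal is correct and matches the argument the paper intends. The paper gives no separate proof: the corollary follows from the deterministic bound $\|\hat V_N-V^\star\|_\infty\le\|\hat Q_N-Q^\star\|_\infty$ (used in the proof of Corollary~\ref{coro:vn consistency}) together with Theorem~\ref{thm:finite sample}, with the same $\Gamma(\eta)$ and $\gamma(N,\eta)$, exactly as you describe.
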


Theorem\ \ref{thm:finite sample} and Corollary\ \ref{coro:fsg for vn} illustrate that the probability of the optimality gap of the value function exceeding a prescribed tolerance decays exponentially with respect to $\gamma(N,\eta)$. 

\subsection{Residuals-based Offline Deep Q-Learning}
While Section \ref{sec:consistency} establishes the consistency and finite sample guarantees when we can evaluate the Q-function exactly, in this section, we introduce a practical residuals-based deep Q-learning (DQN) algorithm in Algorithm\ \ref{algo:residual based dqn} for continuous state and discrete action spaces. Compared to the traditional DQN algorithm discussed in\ \cite{mnih2013playing} and \cite{mnih2015human}, our approach first fits a regression model $\hat f_N$ using the offline dataset $D_N$ and then computes the corresponding empirical residuals. These residuals are used to construct a simulated transition kernel, where the next state $s_{t+1}$ is generated by adding a sampled residual to the model prediction $\hat f_N(s_t,a_t)$. Specifically, starting from any state-action pair $s_t, a_t$, we sample an empirical residual $\hat{\epsilon}_t$ from the residual set $\{\hat{\epsilon}_N^{(i)}\}_{i=1}^N$ and construct the next state as $s_{t+1}=\Pi_{\mathcal S}(\hat f_N(s,a)+\hat\epsilon_t)$. We then train our policy using the generated trajectories.
\begin{algorithm}[ht!]
\caption{Residuals-based Offline Deep Q-Learning.}
\begin{algorithmic}[1]
\label{algo:residual based dqn}
\scriptsize
\STATE \textbf{Input:} Dataset $D_N = \{(s^{(i)},a^{(i)},c^{(i)},s'^{(i)}\}_{i=1}^N$.
\STATE Initialize replay memory $\mathcal{D}$.
\STATE Fit a regression model $\hat{f}_N$ using dataset $D_N$ and calculate the empirical residuals $\{\hat{\epsilon}_N^{(i)}\}_{i=1}^N$, where $\hat{\epsilon}_N^{(i)} = s'^{(i)} - \hat{f}_N(s^{(i)},a^{(i)}),\ \forall i = 1,\ldots,N$.
\STATE Initialize action value function $Q$ with random weights $\theta$.
\STATE Initialize target action value function $\hat{Q}$ with random weights $\hat{\theta}$.
  \FOR {episode $k=1,2,\ldots$}
  \STATE Initialize state $s_1$.
    \FOR {time step $t=1,2,\ldots$}
    \STATE With probability $\epsilon$ select a random action $a_t\in\mathcal{A}$.
    \STATE Otherwise select $a_t=\arg\min_{a^{\prime}}Q(s_t,a^{\prime}; \theta)$.
    \STATE Sample an empirical residual $\hat{\epsilon}_t$ from the set $\{\hat \epsilon_N^{(i)}\}_{i=1}^N$ and update next state as $s_{t+1}= \Pi_{\mathcal S}(\hat f_N(s_t,a_t)+\hat\epsilon_t)$ and cost as $c_t=c(s_t,a_t)$.
    \STATE Store transition $(s_t, a_t, c_t, s_{t+1})$ in replay memory $\mathcal{D}$.
    \STATE Sample random minibatch of $M$ transitions  from $\mathcal{D}$: $\{(s_j, a_j, c_j, s_{j+1})\}_{j=1}^M$.
    \STATE Set $y_j = c_j + \gamma\min_{a^{\prime}}\hat{Q}(s_{j+1}, ,a^{\prime};\hat{\theta})$ for all $j=1,\ldots, M$.
      \STATE Calculate loss $\mathcal{L}=\frac{1}{M}\sum_{j=1}^M(y_j-Q(s_j, a_j; \theta))^2$.
      \STATE Perform a gradient descent step to minimize the loss $\mathcal{L}$ with respect to the network parameter $\theta$.
      \STATE Every $C$ steps reset $\hat{Q}=Q$.
  \ENDFOR
  \ENDFOR
\end{algorithmic}
\end{algorithm}

\section{Numerical Results}\label{sec:results}
We implement Algorithm\ \ref{algo:residual based dqn} in a stochastic variant of the CartPole environment, where we add a Gaussian noise $\epsilon\sim\mathcal N(0,0.25)$ to the observed cart position to enable stochastic transitions. We first collect transition data from this true environment and fit a neural network that takes the current state-action pair as input and predicts the next state, where we consider a single hidden layer with 64 neurons and ReLU activation. The Q-network is built as a fully connected neural network with two hidden layers of width 64 and ReLU activations. We train the Q-network within the simulated transition kernel using the \texttt{Adam} optimizer, with learning rate $10^{-4}$, and discount factor $\gamma=0.99$. We adopt an $\epsilon$-greedy action selection rule, where $\epsilon$ is initialized at 0.9 and decays exponentially to reach a minimum value of 0.01. For each run, the policy is trained for 1000 episodes in the residuals-based simulator and tested in the true environment. We report the average results over 5 independent runs.

\subsection{Comparison of Different Sample Sizes}
We evaluate the proposed residuals-based offline DQN algorithm under different number of trajectories sampled in the offline dataset, denoted by $n$. Specifically, we consider $n \in \{500,600,700,800,900,1000\}$. We report the average training and testing rewards from 5 independent runs in Fig.\ \ref{fig:compare sample sizes}.
\begin{figure}[ht!]
    \centering
    \begin{subfigure}[t]{0.45\textwidth}
        \centering
        \includegraphics[width=0.9\textwidth]{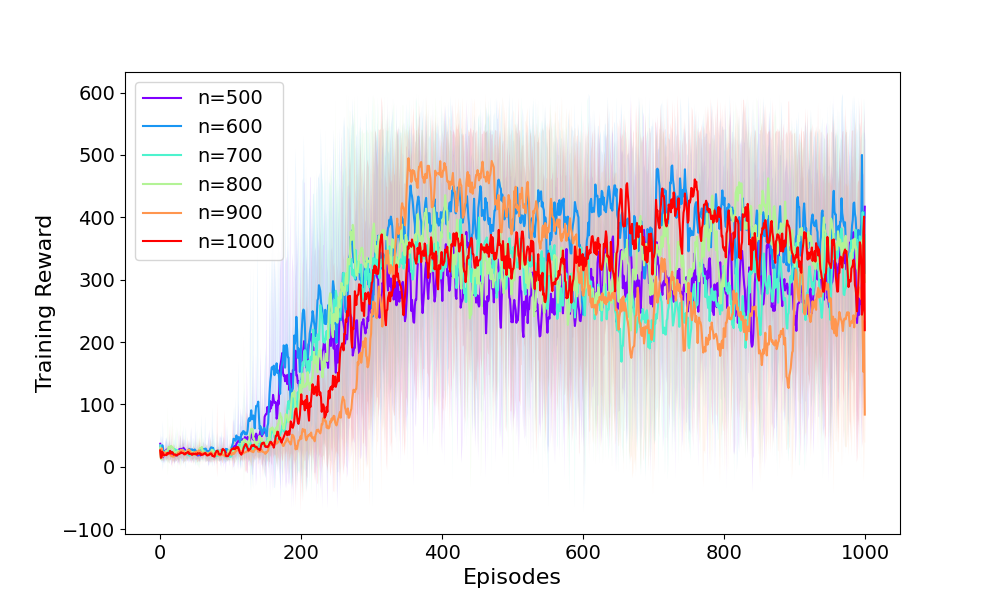}
        \caption{Training}
        \label{fig:different samples training}
    \end{subfigure}
    \hfill
    \begin{subfigure}[t]{0.45\textwidth}
        \centering
        \includegraphics[width=0.9\textwidth]{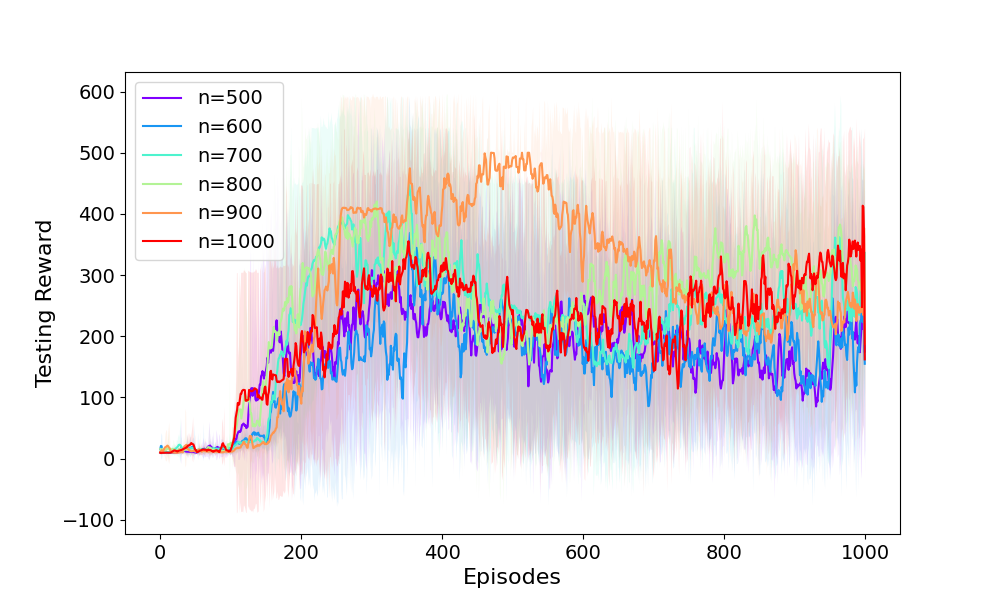}
        \caption{Testing}
        \label{fig:different samples evaluation}
    \end{subfigure}
    \caption{Comparison of different sample sizes}
    \label{fig:compare sample sizes}
\end{figure}
From Fig.\ \ref{fig:compare sample sizes}, both the training and testing rewards increase as $n$ increases, which indicates that increasing the amount of offline data improves the quality of the learned transition model and the effectiveness of the resulting policy in the testing environment. 
\subsection{Comparison of Different Models}
We also compare our model with a baseline approach in which the next state is predicted only using the regression $\hat{f}_N(s,a)$, without adding the empirical residuals. Fig.\ \ref{fig:model comparison} presents both the training and testing performances for our model (w/ residuals) and the benchmark (w/o residuals), where we take $n=500$ and $n=1000$.

\begin{figure}[ht!]
    \centering

    \begin{subfigure}[b]{0.45\textwidth}
        \centering
        \includegraphics[width=0.8\textwidth]{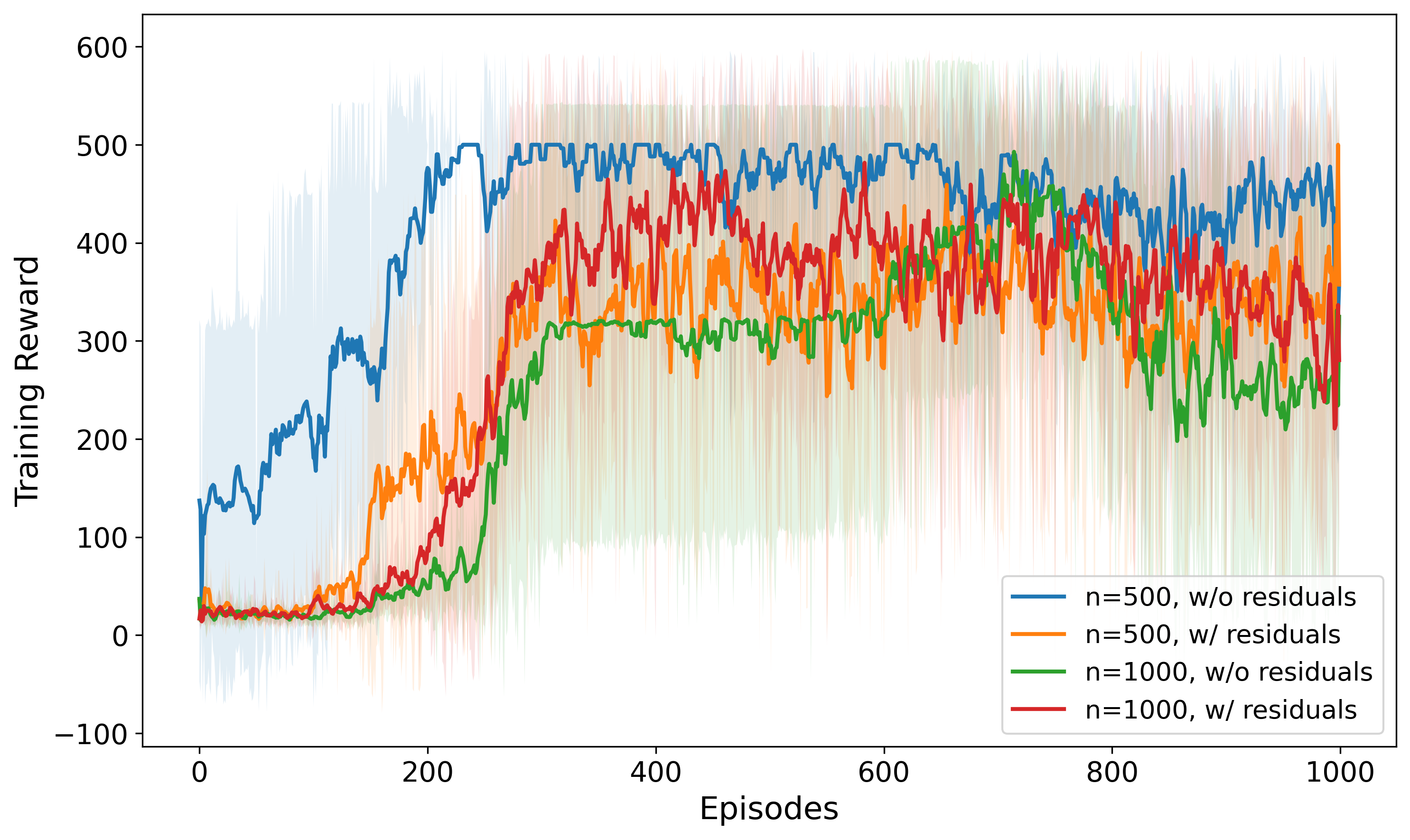}
        \caption{Training}
        \label{fig:sub1}
    \end{subfigure}
    \hfill
    \begin{subfigure}[b]{0.45\textwidth}
        \centering
        \includegraphics[width=0.8\textwidth]{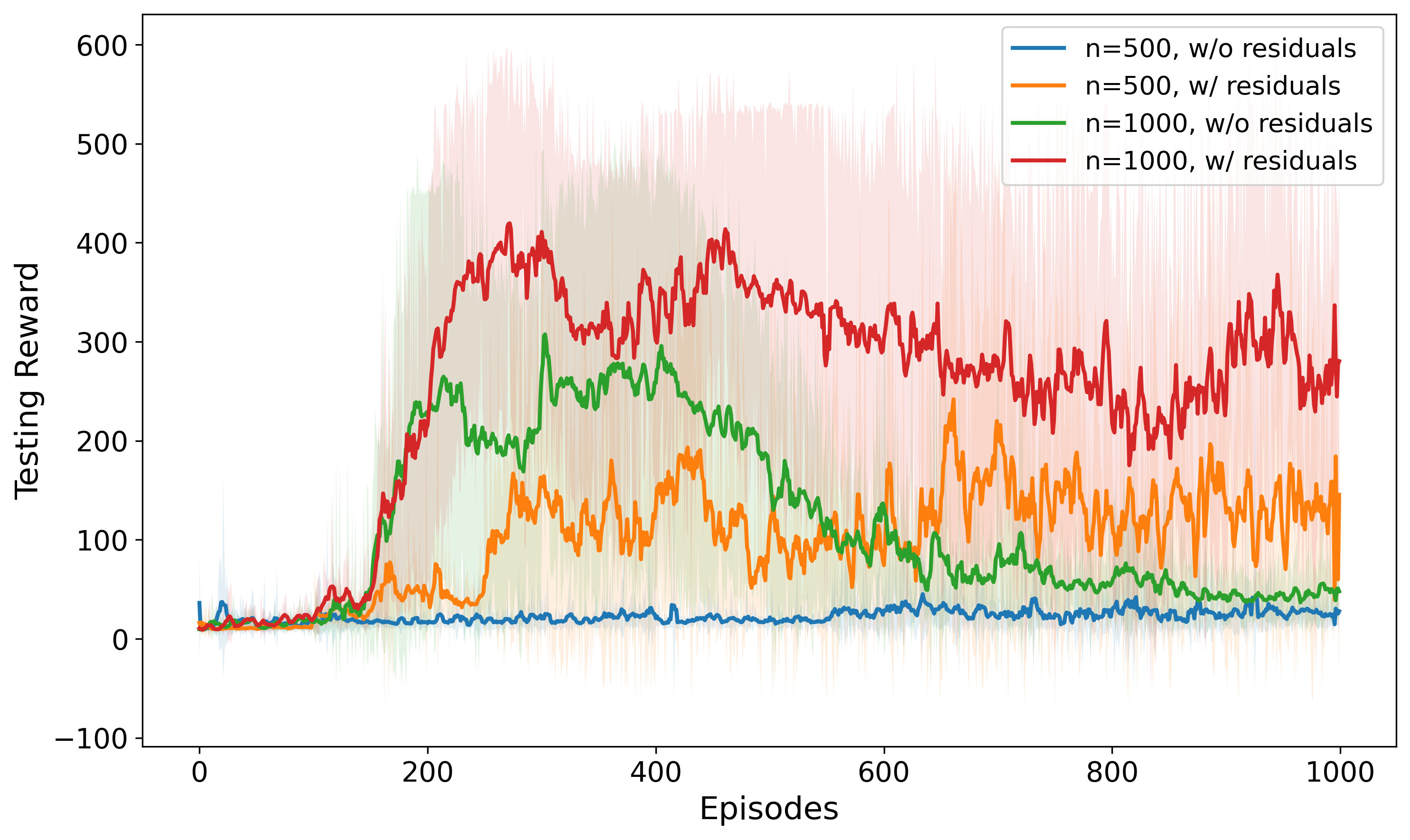}
        \caption{Testing}
        \label{fig:sub4}
    \end{subfigure}
    \caption{Comparison of Models}
    \label{fig:model comparison}
\end{figure}
Fig.\ \ref{fig:model comparison} shows that even if the baseline model achieves better training performance, it performs poorly in the true environment for both $n=500$ and $n=1000$. On the other hand, our residuals-based approach always outperforms by achieving a higher testing reward in the true environment. Specifically, when $n=1000$, the training and testing performances under the residuals-based approach are similar, which shows that it provides an accurate approximation of the underlying stochastic transition. This illustrates the benefit of accounting for estimation error in predicting the transition dynamics in offline RL.
\section{CONCLUSIONS}

In this paper, we introduced a residuals-based offline RL framework that directly links the estimation error in learning transition dynamics to downstream policy optimization. In particular, we first fit a regression model and constructed empirical residuals based on the offline dataset. Then, we defined a residuals-based Bellman optimality operator, where the transition kernel is approximated by the estimated regression model and empirical residuals. We proved that this residuals-based Bellman optimality operator is a contraction mapping and established consistency and finite-sample guarantees for its corresponding fixed point. For practical use, we also designed a residuals-based offline DQN algorithm and demonstrated its superior performance through numerical experiments on a stochastic CartPole environment. 
As a possible future direction, we will extend the proposed framework to an empirical residuals-based distributionally robust optimization setting for offline RL, which will account for distributional ambiguity and improve robustness for learning the uncertain transition dynamics.


\section{ACKNOWLEDGMENTS}

This work was supported in part by the National Science
Foundation under Grant \#2331782.


\bibliographystyle{IEEEtran}
\bibliography{bibl.bib, Xian_bib.bib}

\end{document}